\newtheorem{proposition}{Proposition}
\newtheorem{theorem}{Theorem}
\newtheorem{remark}{Remark}
\newcommand{\cmrk}{\textcolor[rgb]{0,0.7,0}{\ding{51}}}
\newcommand{\xmrk}{\textcolor{red}{\ding{55}}}
\title{Lifting the Curse of Capacity Gap in \\Distilling Language Models}
\author{Chen Zhang\textsuperscript{\ding{168}}, Yang Yang\textsuperscript{\ding{169}}, Jiahao Liu\textsuperscript{\ding{169}}, Jingang Wang\textsuperscript{\ding{169}}, Yunsen Xian\textsuperscript{\ding{169}}, \\ 
  \textbf{Benyou Wang\textsuperscript{\ding{170}}, Dawei Song\textsuperscript{\ding{168}}\Thanks{ Dawei Song is the corresponding author.}} \\
  \textsuperscript{\ding{168}}Beijing Institute of Technology \quad
  \textsuperscript{\ding{169}}Meituan NLP \\
  \textsuperscript{\ding{170}}The Chinese University of Hong Kong, Shenzhen \\
  \texttt{chenzhang9702@outlook.com}}
\begin{document}

\maketitle

\begin{abstract}
Pretrained language models (LMs) have shown compelling performance on various downstream tasks, but unfortunately they require a tremendous amount of inference compute. Knowledge distillation finds a path to compress LMs to small ones with a teacher-student paradigm. However, when the capacity gap between the teacher and the student is large, a curse of capacity gap appears, invoking a deficiency in distilling LMs. While a few studies have been carried out to fill the gap, the curse is not yet well tackled. In this paper, we aim at lifting the curse of capacity gap via enlarging the capacity of the student without notably increasing the inference compute. Largely motivated by sparse activation regime of mixture of experts (\textsc{MoE}), we propose a mixture of minimal experts (\textsc{MiniMoE}), which imposes extra parameters to the student but introduces almost no additional inference compute. Experimental results on GLUE and CoNLL demonstrate the curse of capacity gap is lifted by the magic of \textsc{MiniMoE} to a large extent. \textsc{MiniMoE} also achieves the state-of-the-art performance at small FLOPs compared with a range of competitive baselines. With a compression rate as much as $\sim$50$\times$, \textsc{MiniMoE} preserves $\sim$95\% GLUE score of the teacher.\footnote{Code is available at \url{https://github.com/GeneZC/MiniMoE}}
\end{abstract}

\section{Introduction}

Pretrained language models (LMs) have become a popular choice for various downstream tasks, e.g., text classification, token classification, and question answering~\citep{DevlinCLT19,Liu19,RaffelSRLNMZLL20}. Unfortunately, appealing performance comes with a huge cost of inference compute due to the scale of LMs. Knowledge distillation~\citep{HintonVD15,SunCGL19}, as an alternative to model pruning~\citep{HanPTD15} and quantization~\citep{SungSH15}, discovers a way to compress~\citep{BucilaCN06} LMs with a teacher-student paradigm.

However, in LM distillation, we recognize a \textit{\textbf{curse} of capacity gap} as:
\begin{quote}
    \centering
    ``\textit{Large teachers, poor students.}''
\end{quote}
The curse of capacity gap refers to a deficiency that a larger teacher might unexpectedly result in a poorer student especially when the capacity gap between the teacher and the student is large~\citep{MirzadehFLLMG20,ChoH19}, as illustrated in Table~\ref{tab_1}. Notably, this is the first verification in LM distillation since previous studies recognize the curse in vision model distillation. Although a few studies~\citep{WangW0B0020,Zhang22,ParkCJKH21} have investigated to fill the gap, the curse is still not yet tackled.

\begin{table}[t]
    \centering
    \captionof{table}{The \textit{curse of the capacity gap} in terms of GLUE~\citep{WangSMHLB19}. The $\triangle$ denotes the performance difference of preceding two numbers. To ensure students at similar scales, the student/teacher scale ratios are properly reduced for some methods.}
    \begin{adjustbox}{width=0.47\textwidth,center}
    \begin{tabular}{lccc}
    \toprule
      \textbf{Method} & BERT\textsubscript{\sf base} & BERT\textsubscript{\sf large} & $\triangle$ \\
      \midrule
      Teacher & 86.7 & 88.3 & $+$1.6 \\
      \midrule
      KD\textsubscript{\sf 10\%/5\%}~\citeyearpar{HintonVD15} & 81.3 & 80.8 & $-$0.5 \\
      DynaBERT\textsubscript{\sf 15\%/5\%}~\citeyearpar{HouHSJCL20} & 81.1 & 79.2 & $-$1.9 \\
      MiniDisc\textsubscript{\sf 10\%/5\%}~\citeyearpar{Zhang22} & 82.4 & 82.1 & $-$0.3 \\
      TinyBERT\textsubscript{\sf 4L;312H}~\citeyearpar{JiaoYSJCL0L20} & 82.7 & 82.5 & $-$0.2 \\
      MiniLM\textsubscript{\sf 3L;384H}~\citeyearpar{WangBHDW21} & 82.5 & 82.0 & $-$0.5 \\
      MiniMoE\textsubscript{\sf 3L;384H} (\textbf{ours}) & 82.6 & 83.1 & $+$0.5 \\
    \bottomrule
    \end{tabular}
    \end{adjustbox}
    \label{tab_1}
\end{table}
        
\begin{figure}
    \centering
    \includegraphics[width=0.47\textwidth]{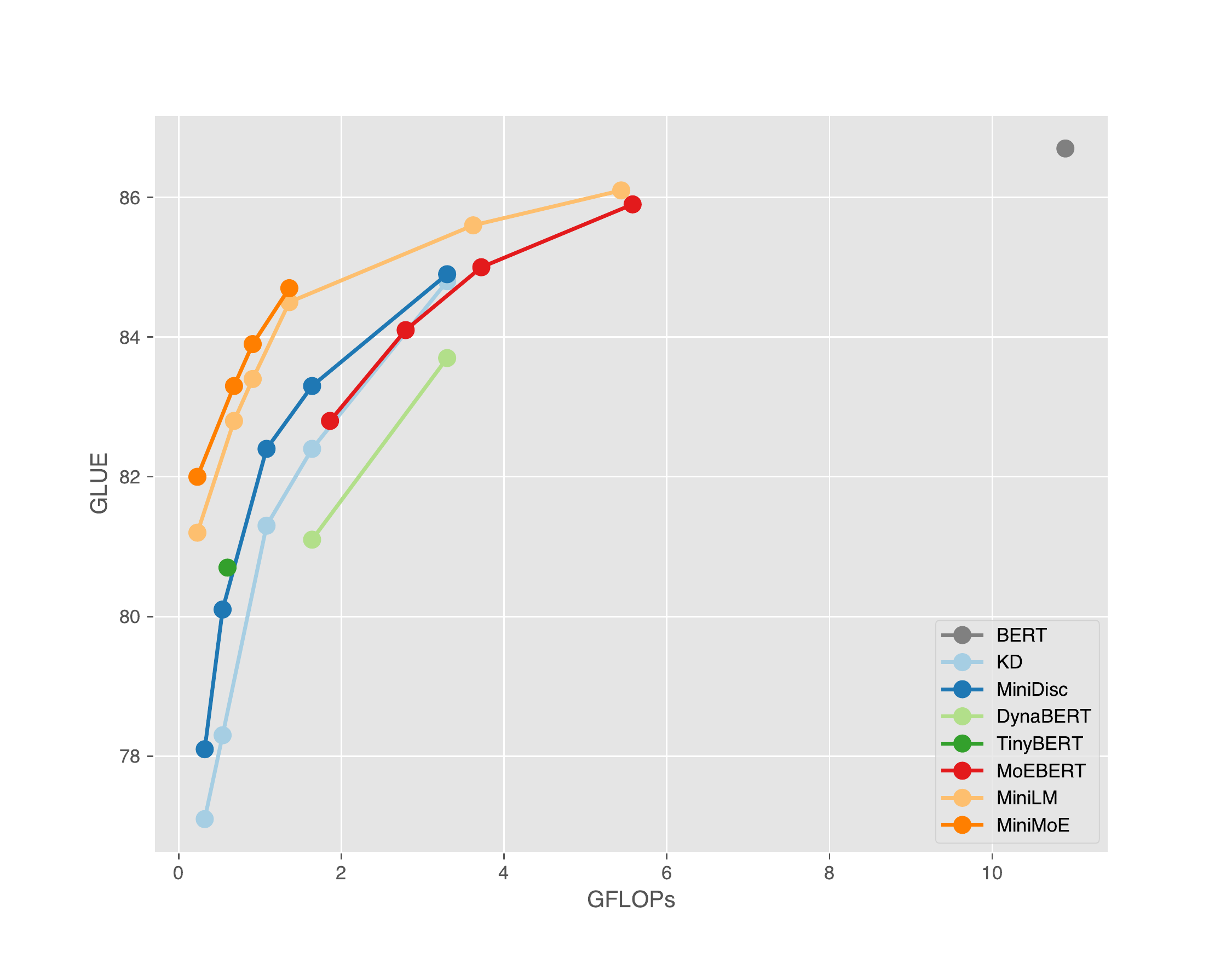}
    \caption{GLUE v.s. GFLOPs.}
    \label{fig_1}
\end{figure}

To the demand, we aim at lifting the curse of capacity gap via enlarging the capacity of the student without notably increasing the inference compute. We propose a mixture of minimal experts (\textsc{MiniMoE}), inspired by the intuition of sparse activation of mixture of experts (\textsc{MoE})~\citep{ShazeerMMDLHD17}. Thanks to that the activation process can be parallel on either single or multiple devices~\citep{He21,RajbhandariLYZA22}, \textsc{MiniMoE} on the one hand imposes extra parameters to the student, but on the other hand introduces negligibly additional inference compute brought by routing algorithm. To our best knowledge, this is the first work aiming at lifting the curse completely.

Experiments are conducted on GLUE~\citep{WangSMHLB19} and CoNLL~\citep{SangM03}. The results exhibit that \textsc{MiniMoE} largely lifts the curse of the gap as in Table~\ref{tab_1}. \textsc{MiniMoE} also achieves state-of-the-art performance compared with a range of competitive baselines, as shown in Figure~\ref{fig_1}. With compression as much as $\sim$50$\times$, \textsc{MiniMoE} preserves $\times$95\% GLUE score of the teacher. Thereby, we state that \textsc{MiniMoE} is \textit{a small yet nontrivial magic, making a great difference in lifting the curse}.


\section{Curse of Capacity Gap}

The curse of capacity gap is not new but is already recognized in studies on vision model distillation~\citep{MirzadehFLLMG20,ChoH19}. While a hit-the-mind drawback of the curse is that the performance of distilling to a small student can be dramatically worse than that of distilling to a slightly larger one, a rather counter-intuitive deficiency is invoked as that the performance of distilling from a large teacher can be unexpectedly worse than that of distilling from a smaller one (i.e., \textit{large teacher, poor student}). We here give a minor theoretical justification on the curse, as a plus to the empirical justification.

\begin{proposition}[VC dimension theory,~\citealp{Vapnik98}]
\label{propos_1}
Assuming that the teacher function is $f_{\mathcal{T}}\in\mathcal{F}_{\mathcal{T}}$, the labeling function is $f\in\mathcal{F}$, and the data is $\mathcal{D}$, we have:
\begin{equation}\nonumber
    r(f_{\mathcal{T}})-r(f)\leq \epsilon_{\mathcal{T}}+o(\frac{|\mathcal{F}_{\mathcal{T}}|_{c}}{|\mathcal{D}|}),
\end{equation}
where $r(\cdot)$ is the risk function, $|\cdot|_{c}$ is the function class capacity measure, and $|\cdot|$ is the data scale measure. It should be highlighted that the approximation error $\epsilon_{\mathcal{T}}$ is negatively correlated with the capacity of the teacher model while the estimation error $o(\cdot)$ is correlated with the learning optimization.
\end{proposition}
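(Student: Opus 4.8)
The plan is to obtain the bound through the classical approximation--estimation decomposition of excess risk that underlies VC theory, so the ``proof'' is really a matter of assembling standard pieces in the paper's notation. First I would introduce the best-in-class teacher $g^{\star}\in\mathcal{F}_{\mathcal{T}}$ attaining $\inf_{g\in\mathcal{F}_{\mathcal{T}}} r(g)$, and split the excess risk of the learned teacher $f_{\mathcal{T}}$ relative to the labeling function $f$ into two pieces:
\[
r(f_{\mathcal{T}}) - r(f) = \bigl[r(f_{\mathcal{T}}) - r(g^{\star})\bigr] + \bigl[r(g^{\star}) - r(f)\bigr].
\]
The second bracket is, by definition, the \emph{approximation error} $\epsilon_{\mathcal{T}}$: it measures how closely the richest member of $\mathcal{F}_{\mathcal{T}}$ can match $f$, and it shrinks monotonically as the capacity $|\mathcal{F}_{\mathcal{T}}|_{c}$ grows, which is exactly the stated negative correlation between $\epsilon_{\mathcal{T}}$ and teacher capacity.

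The first bracket is the \emph{estimation error}, and bounding it is where the real work lies. Assuming $f_{\mathcal{T}}$ is (approximately) the empirical risk minimizer over $\mathcal{F}_{\mathcal{T}}$ on $\mathcal{D}$, I would insert the empirical risk $\hat{r}$ to obtain the standard three-term chain
\[
r(f_{\mathcal{T}}) - r(g^{\star}) = \bigl[r(f_{\mathcal{T}}) - \hat{r}(f_{\mathcal{T}})\bigr] + \bigl[\hat{r}(f_{\mathcal{T}}) - \hat{r}(g^{\star})\bigr] + \bigl[\hat{r}(g^{\star}) - r(g^{\star})\bigr].
\]
The middle term is nonpositive by the ERM property, and each remaining gap is dominated by the uniform deviation $\sup_{g\in\mathcal{F}_{\mathcal{T}}} |r(g) - \hat{r}(g)|$.

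The key step, and the main obstacle, is controlling this supremum. Here I would invoke Vapnik's VC generalization bound: through symmetrization and a growth-function (union) argument, the uniform deviation is, with high probability, governed by the capacity $|\mathcal{F}_{\mathcal{T}}|_{c}$ (playing the role of the VC dimension) and the sample size $|\mathcal{D}|$, and absorbing constants and the confidence level into the asymptotic symbol yields the $o\bigl(|\mathcal{F}_{\mathcal{T}}|_{c}/|\mathcal{D}|\bigr)$ estimation term. Two delicate points deserve flagging. First, the stated $|\mathcal{F}_{\mathcal{T}}|_{c}/|\mathcal{D}|$ is the \emph{fast} ($d/n$) rate rather than the generic \emph{slow} ($\sqrt{d/n}$) rate, so I would additionally assume realizability (or a bounded-noise, Tsybakov-type condition) under which the fast rate is attainable; second, the statement attributes this term to ``learning optimization,'' so I would make explicit that optimization error is being folded into the same $o(\cdot)$ alongside the ERM assumption, and that the abstract measure $|\cdot|_{c}$ must be identified with a genuine VC-type quantity for the growth-function bound to apply. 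Recombining the approximation term $\epsilon_{\mathcal{T}}$ with this estimation term then reproduces the claimed inequality.
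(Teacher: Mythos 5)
The paper never proves this proposition: it is imported wholesale as a citation to \citealp{Vapnik98}, and the only proof the paper contains (for its Theorem) simply chains this proposition with Proposition~2. So there is no in-paper argument to compare yours against step by step; what you have written is a reconstruction of the classical result the citation points to, and it is essentially correct. The decomposition into approximation error plus estimation error via the best-in-class $g^{\star}$, the three-term ERM chain with the middle term killed by empirical risk minimization, and the symmetrization/growth-function bound on the uniform deviation $\sup_{g\in\mathcal{F}_{\mathcal{T}}}|r(g)-\hat{r}(g)|$ is exactly the standard route. Your two caveats are the right ones and are worth keeping explicit. First, the stated $|\mathcal{F}_{\mathcal{T}}|_{c}/|\mathcal{D}|$ rate is indeed the fast rate, valid only under realizability or low-noise conditions; in the source from which this two-proposition setup is actually lifted (\citealp{Lopez-PazBSV15}, the same paper cited for Proposition~2), this is handled by a rate exponent $\alpha\in[1/2,1]$, and Proposition~1 here corresponds to assuming the teacher's class is rich enough to learn at the fast rate $\alpha=1$ --- so your added assumption is not an artifact of your proof but precisely the assumption the cited theory makes. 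Second, the paper's $o(\cdot)$ is used loosely where $O(\cdot)$ is meant, and your observation that $|\cdot|_{c}$ must be identified with a genuine VC-type capacity for the growth-function argument to apply is a gap in the paper's statement, not in your argument.
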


\begin{proposition}[Generalized distillation theory,~\citealp{Lopez-PazBSV15}]
\label{propos_2}
Additionally providing that the student function is $f_{\mathcal{S}}\in\mathcal{F}_{\mathcal{S}}$, we have:
\begin{equation}\nonumber
    r(f_{\mathcal{S}})-r(f_{\mathcal{T}})\leq \epsilon_{\mathcal{G}}+o(\frac{|\mathcal{F}_{\mathcal{S}}|_{c}}{|\mathcal{D}|^{\alpha}}),
\end{equation}
where the approximation error $\epsilon_{\mathcal{G}}$ is positively correlated with the capacity gap between the teacher and the student models, and $1/2\leq\alpha\leq 1$ is a factor correlated to the learning rate.
\end{proposition}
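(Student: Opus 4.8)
The plan is to transplant the excess-risk decomposition behind Proposition~\ref{propos_1} into the distillation setting, but now to regard the teacher $f_{\mathcal{T}}$ (rather than the labeling function $f$) as the target that the student imitates, so that fitting the teacher's soft outputs becomes the student's learning problem. First I would introduce the best achievable student $f_{\mathcal{S}}^{\star}\in\mathcal{F}_{\mathcal{S}}$, i.e.\ the minimizer of the distillation risk over the student class, and split the quantity of interest as
\begin{equation}\nonumber
    r(f_{\mathcal{S}})-r(f_{\mathcal{T}})=\underbrace{\bigl[r(f_{\mathcal{S}})-r(f_{\mathcal{S}}^{\star})\bigr]}_{\text{estimation}}+\underbrace{\bigl[r(f_{\mathcal{S}}^{\star})-r(f_{\mathcal{T}})\bigr]}_{\text{approximation}}.
\end{equation}
The two brackets are then handled separately, and each is matched to one term on the right-hand side of the claimed inequality.

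For the approximation bracket I would argue that $r(f_{\mathcal{S}}^{\star})-r(f_{\mathcal{T}})$ measures how closely the richest element of $\mathcal{F}_{\mathcal{S}}$ can reproduce $f_{\mathcal{T}}$: when $\mathcal{F}_{\mathcal{S}}$ is strictly poorer than $\mathcal{F}_{\mathcal{T}}$ the student is unable to represent the teacher, forcing this gap to grow. Identifying this quantity with $\epsilon_{\mathcal{G}}$ gives exactly the claimed monotone dependence on the capacity gap. For the estimation bracket I would invoke a standard uniform-convergence (VC or Rademacher) argument over the student class: since $f_{\mathcal{S}}$ is the empirical minimizer and $f_{\mathcal{S}}^{\star}$ the population minimizer within the same class, their risk difference concentrates at a rate governed by the student capacity $|\mathcal{F}_{\mathcal{S}}|_{c}$ and the sample size $|\mathcal{D}|$, yielding the $o(|\mathcal{F}_{\mathcal{S}}|_{c}/|\mathcal{D}|^{\alpha})$ term.

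The hard part will be justifying the exponent $\alpha$ in the range $1/2\leq\alpha\leq1$ rather than the worst-case $\alpha=1/2$ of agnostic learning. This is where distillation earns its advantage: because the soft teacher targets are smoother and, under successful distillation, effectively near-realizable by a suitable student, the relevant empirical process admits a Bernstein-type variance condition and hence a faster concentration rate. I would make this precise by imposing a low-noise / realizability assumption on the distillation objective and then interpolating: the noiseless, fully realizable case delivers the fast rate $\alpha=1$, the fully agnostic case degrades to $\alpha=1/2$, and intermediate noise levels trace out the stated interval, with $\alpha$ tied to the optimization (learning-rate) regime as asserted. Assembling the two brackets then reproduces the inequality of the proposition.
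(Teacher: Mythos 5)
You should know at the outset that the paper never proves this proposition: it is imported wholesale as a known result from \citet{Lopez-PazBSV15}, and the only proof the paper contains is for its Theorem, which simply adds Propositions~\ref{propos_1} and~\ref{propos_2} together. Judged against the argument in the cited source, your reconstruction is essentially the intended one: \citet{Lopez-PazBSV15} likewise view distillation as an ordinary supervised problem in which the teacher's soft outputs replace the labeling function, apply the same estimation/approximation split over the student class $\mathcal{F}_{\mathcal{S}}$, read the approximation term as the student--teacher gap $\epsilon_{\mathcal{G}}$, and place $\alpha$ in $[1/2,1]$ via Vapnik's slow-rate/fast-rate dichotomy, with the smooth teacher targets pushing the transfer problem toward the realizable (fast, $\alpha=1$) end. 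Two points in your sketch remain at the same heuristic level as the original, and you should flag them rather than present them as proved: first, your estimation bracket $r(f_{\mathcal{S}})-r(f_{\mathcal{S}}^{\star})$ concerns the risk $r$, whereas $f_{\mathcal{S}}$ is the empirical minimizer of the teacher-matching (distillation) objective, so uniform convergence over that objective does not directly control $r$ unless $r$ is re-interpreted as risk relative to the teacher---which is what the source implicitly does; second, the interpolation of $\alpha$ between $1/2$ and $1$ through a Bernstein/low-noise condition is asserted, not derived, both in your sketch and in theirs. Neither caveat affects how the paper uses the proposition, since Remark~\ref{rmk_1} only needs the qualitative monotonicity of $\epsilon_{\mathcal{T}}$ and $\epsilon_{\mathcal{G}}$, but they are exactly where a fully rigorous proof would have to do real work.
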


\begin{theorem}
The bound for the student function at a learning rate can be written as:
\begin{equation}\nonumber
\begin{aligned}
    r(f_{\mathcal{S}})-r(f)&\leq \epsilon_{\mathcal{T}}+\epsilon_{\mathcal{G}}+o(\frac{|\mathcal{F}_{\mathcal{T}}|_{c}}{|\mathcal{D}|})+o(\frac{|\mathcal{F}_{\mathcal{S}}|_{c}}{|\mathcal{D}|^{\alpha}}) \\
    &\leq \epsilon_{\mathcal{T}}+\epsilon_{\mathcal{G}}+o(\frac{|\mathcal{F}_{\mathcal{T}}|_{c}+|\mathcal{F}_{\mathcal{S}}|_{c}}{|\mathcal{D}|^{\alpha}}),
\end{aligned}
\end{equation}
\end{theorem}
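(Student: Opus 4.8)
The plan is to obtain the bound by telescoping the risk difference $r(f_{\mathcal{S}})-r(f)$ through the intermediate teacher risk $r(f_{\mathcal{T}})$ and then invoking the two preceding propositions directly. First I would write the exact decomposition
\begin{equation}\nonumber
    r(f_{\mathcal{S}})-r(f) = \bigl[r(f_{\mathcal{S}})-r(f_{\mathcal{T}})\bigr] + \bigl[r(f_{\mathcal{T}})-r(f)\bigr],
\end{equation}
which is an identity requiring no assumptions. This reduces the theorem to separately controlling the student-to-teacher gap and the teacher-to-labeling gap, each of which is exactly the quantity bounded in Proposition~\ref{propos_2} and Proposition~\ref{propos_1} respectively.

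Next I would apply the two propositions to the corresponding bracketed terms and add the resulting inequalities. Proposition~\ref{propos_1} supplies $r(f_{\mathcal{T}})-r(f)\leq \epsilon_{\mathcal{T}}+o(|\mathcal{F}_{\mathcal{T}}|_{c}/|\mathcal{D}|)$ and Proposition~\ref{propos_2} supplies $r(f_{\mathcal{S}})-r(f_{\mathcal{T}})\leq \epsilon_{\mathcal{G}}+o(|\mathcal{F}_{\mathcal{S}}|_{c}/|\mathcal{D}|^{\alpha})$; summing them termwise yields the first line of the claimed bound immediately. This step is essentially a triangle-inequality/monotonicity argument on the risk functional, so no new estimates are introduced here.

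To pass from the first line to the second, consolidated line, I would exploit the constraint $1/2\leq\alpha\leq 1$. Since $|\mathcal{D}|\geq 1$ and $\alpha\leq 1$ give $|\mathcal{D}|^{\alpha}\leq|\mathcal{D}|$, the teacher estimation term satisfies $|\mathcal{F}_{\mathcal{T}}|_{c}/|\mathcal{D}|\leq|\mathcal{F}_{\mathcal{T}}|_{c}/|\mathcal{D}|^{\alpha}$, so both estimation terms can be placed over the common, slower-decaying denominator $|\mathcal{D}|^{\alpha}$. Merging the two order terms over this shared denominator then collapses them into the single expression $o((|\mathcal{F}_{\mathcal{T}}|_{c}+|\mathcal{F}_{\mathcal{S}}|_{c})/|\mathcal{D}|^{\alpha})$, completing the derivation.

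The main obstacle is not analytic but notational: one must justify that a sum of two $o(\cdot)$ terms with different denominators can legitimately be absorbed into a single $o(\cdot)$ with the dominant (slower) denominator. Because both capacity measures and $|\mathcal{D}|$ are nonnegative, dominating the faster-decaying term by the slower one and then using additivity of little-$o$ for nonnegative functions makes this rigorous; the only care needed is to state that the $o(\cdot)$ here is taken in the regime $|\mathcal{D}|\to\infty$ with the capacities held as the relevant scaling quantities. I would flag this as the sole step where the informal asymptotic notation of the two propositions must be read consistently for the consolidation to be valid.
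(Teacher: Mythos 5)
Your proposal is correct and follows essentially the same route as the paper: the paper's proof is just the one-line remark that the result follows ``by combining Proposition~\ref{propos_1} and~\ref{propos_2},'' which is precisely the telescoping decomposition through $r(f_{\mathcal{T}})$ that you spell out, together with the merging of the two estimation terms via $|\mathcal{D}|^{\alpha}\leq|\mathcal{D}|$ for $\alpha\leq 1$. Your write-up is simply a more careful version of the paper's argument, and your flag about reading the informal $o(\cdot)$ notation consistently is a reasonable caveat rather than a deviation.
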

\begin{proof}
The proof is rather straightforward by combining Proposition~\ref{propos_1} and~\ref{propos_2}.
\end{proof}

\begin{remark}
\label{rmk_1}
Under the same distillation setting, we can ignore the estimation error. When we compare two students of different capacities distilled from a teacher of the same capacity, the student of a smaller capacity has a larger $\epsilon_{\mathcal{G}}$ thus lower performance. When we compare two students of the same capacities distilled from teachers of different capacities, the student distilled from the teacher of a larger capacity has a smaller $\epsilon_{\mathcal{T}}$ yet a larger $\epsilon_{\mathcal{G}}$ thus a tradeoff. 
\end{remark}

Remark~\ref{rmk_1} basically tells that a tradeoff is associated with the increase of teacher capacity, implying that increasing teacher capacity would first lead to improved but then degraded student performance. This tradeoff naturally corresponds with the curse.

On the other hand, it is accepted that large capacity gap is a pain and is processed in literature of LM distillation~\citep{WangW0B0020,Zhang22,ZhouXM22}. Being unaware of the curse of capacity gap, these studies attempt to offer student-friendly teachers by either interpolating teacher assistants~\citep{WangW0B0020,Zhang22} or adapting teacher knowledge~\citep{ZhouXM22,Yang22}. The unawareness is largely due to a fun fact that they only distil LMs like BERT\textsubscript{\sf base}, but neglect the scalability to LMs like BERT\textsubscript{\sf large} especially when the student is small. Though the performance of student can be boosted in this way, the curse still remains in LM distillation as in Figure~\ref{fig_2}. Other related work in knowledge distillation is given in Appendix~\ref{app_e}.

\begin{figure}[ht]
    \centering
    \includegraphics[width=0.47\textwidth]{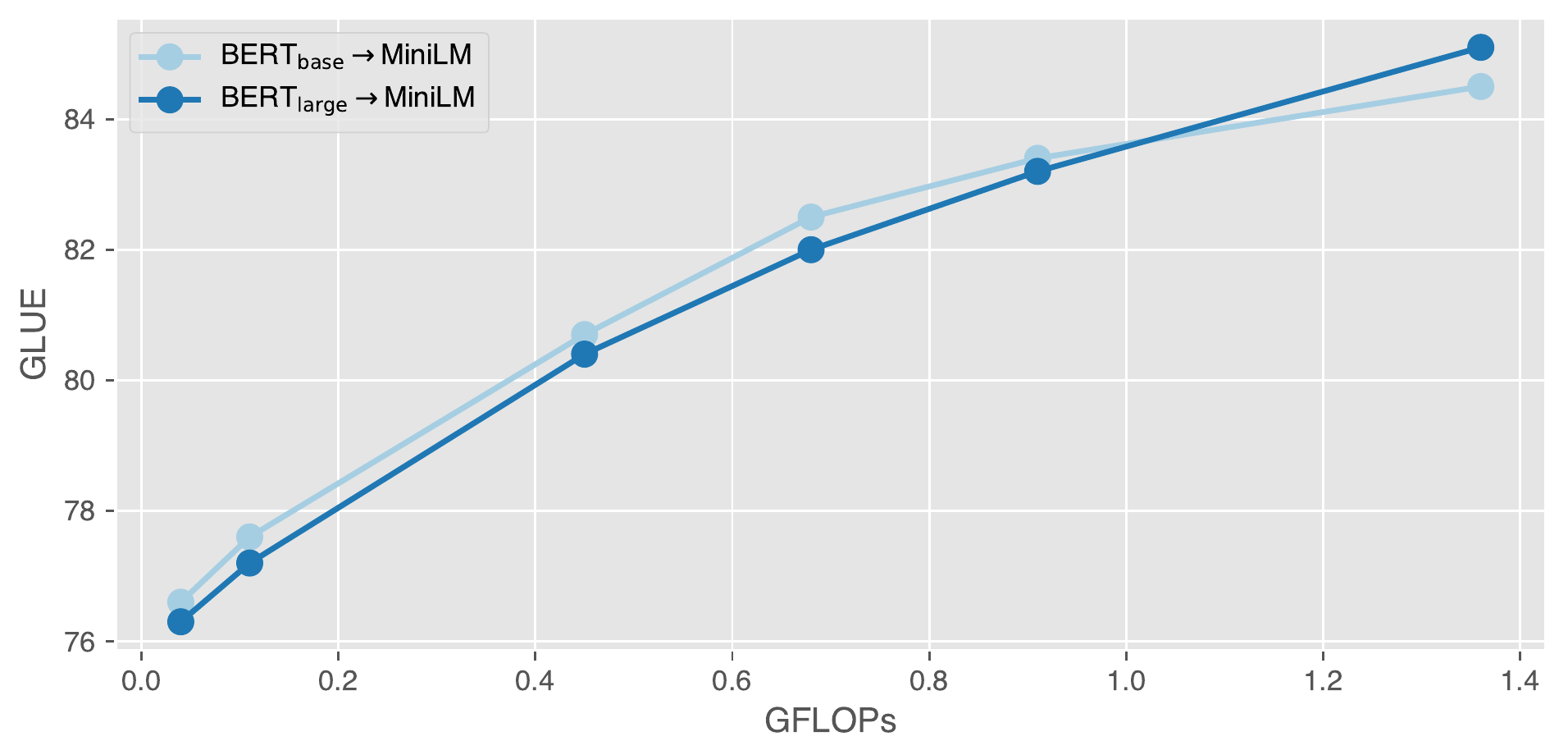}
    \includegraphics[width=0.47\textwidth]{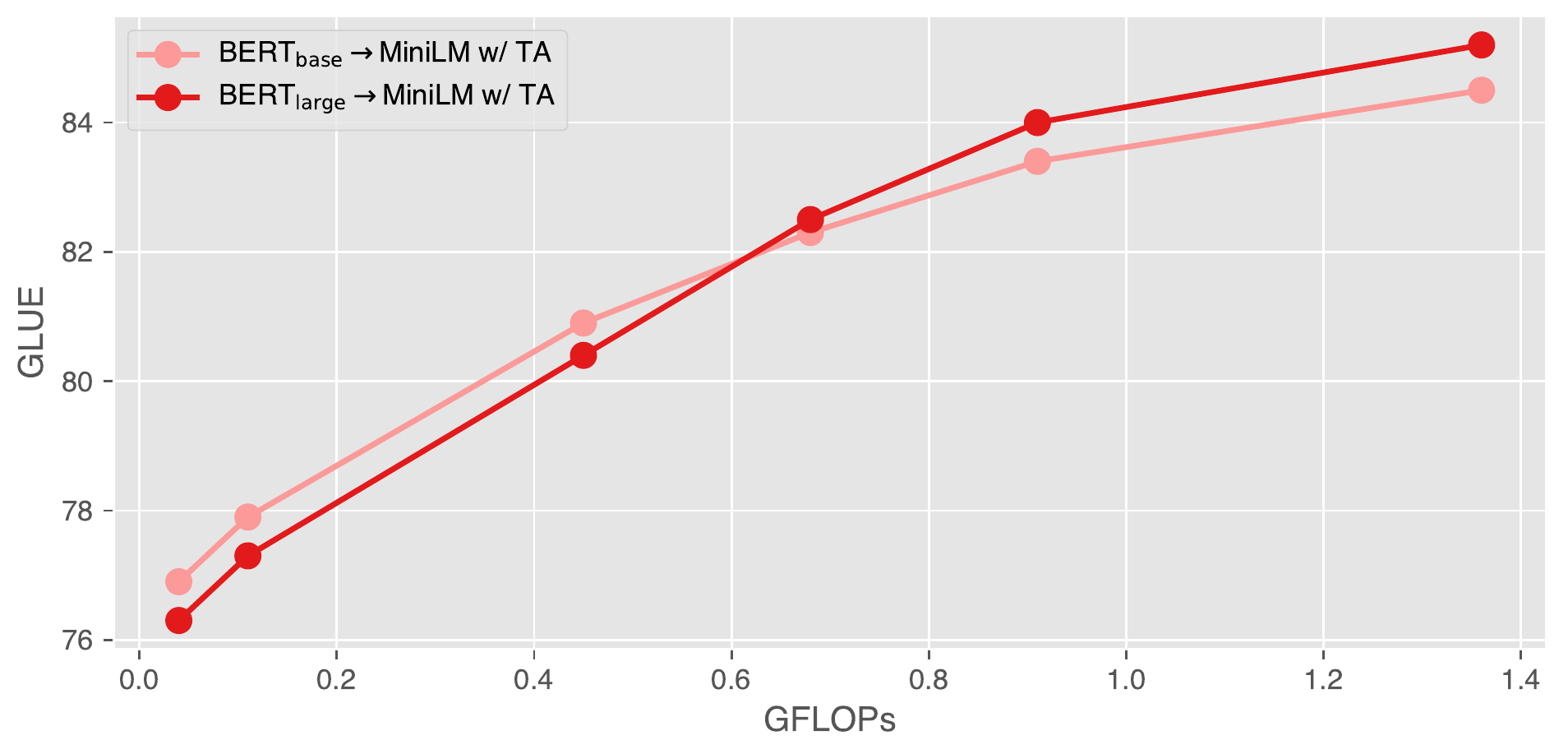}
    \caption{The performance of MiniLM and MiniLM w/ TA across different student scales upon distilling BERT\textsubscript{\sf base}. We are glad to share checkpoints of an array of scales, together with those of \textsc{MiniMoE}, to facilitate the development of related research. It should be noted the unit of a vertical grid is comparably large.}
    \label{fig_2}
\end{figure}

Embarrassingly, while the curse is claimed to be tackled in vision model distillation~\citep{ZhuW21a,ParkCJKH21,Zhao22}, our preliminary study (cf. Table~\ref{tab_6} in Appendix~\ref{app_h}) indicates they are either expensive or not capable of LMs. The potential differences are as follows: tasks (e.g., ImageNet v.s. GLUE), backbones (e.g., ResNets v.s. transformers), and paradigms (e.g., from scratch v.s. pretraining). 


\section{MiniMoE}

\subsection{Motivation}

Enlarging the capacity of the student is an intuitive solution to lift the curse of capacity gap. However, regarding the inference compute efficiency, the increase of capacity should not introduce much inference compute. 

An initial proposal can be using quantized backbones~\citep{ZafrirBIW19,BaiZHSJJLLK20}. Quantized backbones may decrease the compute precision, therefore maintaining inference compute constant, along the course of enlarging the capacity. But a vital portion of hardware-specific modifications are needed to do so. We hence move on to next possibility. 

Another alternative is using dynamic networks~\citep{Han21} based on the idea of conditional computation~\citep{BengioBPP15}. MoE computation~\citep{ShazeerMMDLHD17,Fedus21} is an option derived upon the sparse activation property to increase the scale with only minor losses in compute efficiency. The other commonly used one is depth-adaptive computation~\citep{XinTLYL20,ZhouXGM0W20,GoyalCRCSV20,KimC20} which involves layers into computation adaptively on either example~\citep[alias early exiting,][]{XinTLYL20,ZhouXGM0W20} or token~\citep[alias token reduction,][]{GoyalCRCSV20,KimC20} level. A critical distinction between MoE and depth-adaptive models is that the compute of an MoE model is accurately under control while that of a depth-adaptive model is not. We are impelled by the merits of MoE, and propose a \textsc{MiniMoE} so that the capacity of the student can be enlarged without much inference overhead increment.

Additionally, we argue that \textsc{MiniMoE} is orthogonal to alternatives mentioned above, and \textsc{MiniMoE} can be incorporated to these alternatives and makes it possible to serve more extreme scenarios. It is noteworthy that a certain stream of work~\citep{ZhangL00S022,ZuoZLHZC22} actually accelerates LMs via precisely converting them into MoE models. Nonetheless, the moefication process is directly exerted to LMs with limited inference compute improvements (cf. MoEBERT in Figure~\ref{tab_1}). Contrarily, \textsc{MiniMoE} is comprised of minimal experts, each of which can be extremely small. A comparison between mentioned possibilities and \textsc{MiniMoE} is listed in Table~\ref{tab_2}. And other related work of interest is given in Appendix~\ref{app_e}.

\begin{table}[t]
    \centering
    \captionof{table}{A comparison between \textsc{MiniMoE} and other possible alternatives.}
    \begin{adjustbox}{width=0.47\textwidth,center}
    \begin{tabular}{cccc}
    \toprule
      \textbf{Method} & \makecell[c]{\textbf{Flexible}\\ \textbf{Hardware}} & \makecell[c]{\textbf{Controllable}\\ \textbf{Compute}} & \makecell[c]{\textbf{Scalable}\\ \textbf{Compute}} \\
    \midrule
      Quantization & \xmrk & \cmrk & \cmrk \\
      Depth-adaptation & \cmrk & \xmrk & \cmrk \\
      MoEfication & \cmrk & \cmrk & \xmrk \\
      \textsc{MiniMoE} & \cmrk & \cmrk & \cmrk \\
    \bottomrule
    \end{tabular}
    \end{adjustbox}
    \label{tab_2}
\end{table}

\subsection{Implementation}

\paragraph{Minimal Language Models}

Typical language models are comprised of a stack of transformers layers~\citep{VaswaniSPUJGKP17}, and are pretrained with language modeling tasks such as masked language modeling~\citep{DevlinCLT19}. A transformer layer can be decomposed to a multi-head self-attention (MHA) block and a feed-forward network (FFN) block. Concretely, given an $n$-length sequence of $d$-dimension input vectors $\mathbf{X}\in\mathbb{R}^{n\times d}$ with the $i$-th vector being $\mathbf{x}_{i}$, the output of the MHA block with $A$ independent heads can be represented as:
\begin{equation}\nonumber
    \text{MHA}(\mathbf{X})=\sum_{j=1}^{A}\text{Attn}(\mathbf{X};\mathbf{W}^{\sf Q}_{j},\mathbf{W}^{\sf K}_{j})\mathbf{X}\mathbf{W}^{\sf V}_{j}\mathbf{W}^{\sf O}_{j},
\end{equation}
\begin{equation}\nonumber
\begin{aligned}
    \text{Attn}(\mathbf{X};&\mathbf{W}^{\sf Q}_{j},\mathbf{W}^{\sf K}_{j})= \\
    &\text{softmax}(\mathbf{X}\mathbf{W}^{\sf Q}_{j}\mathbf{W}^{\sf K\top}_{j}\mathbf{X}^{\top}/d^{\sf A}),
\end{aligned}
\end{equation}
where the $j$-th head is parameterized by $\mathbf{W}^{\sf Q}_{j}$, $\mathbf{W}^{\sf K}_{j}$, $\mathbf{W}^{\sf V}_{j}\in\mathbb{R}^{d\times d^{\sf A}}$, and $\mathbf{W}^{\sf O}_{j}\in\mathbb{R}^{d^{\sf A}\times d}$. On the other hand, the output of the FFN block is shown as:
\begin{equation}\nonumber
    \text{FFN}(\mathbf{X})=\text{GELU}(\mathbf{X}\mathbf{W}^{\sf I})\mathbf{W}^{\sf O},
\end{equation}
where two fully-connected layers are parameterized by $\mathbf{W}^{\sf I}\in\mathbb{R}^{d\times d^{\sf I}}$ and $\mathbf{W}^{\sf O}\in\mathbb{R}^{d^{\sf I}\times d}$ respectively. Details like biases, normalizations of a transformer layer are omitted for brevity.

To reach an acceptable compute budget, pioneering studies either pretrain language models or distil ones of small scales from LMs as in Figure~\ref{fig_x}. There are three lines of work in LM distillation: firstly, task-specific distillation~\citep{SunCGL19,LiLZXYJ20,SunGFCWL20,ParkKY21,HouHSJCL20,XiaZC22} that conducts distillation on a specific task at finetuning stage; secondly, task-agnostic distillation~\citep{Turc19,Sanh19,SunYSLYZ20,WangBHDW21} that conducts distillation at pretraining stage; and thirdly, two-stage distillation~\citep{JiaoYSJCL0L20} that combines the power of both task-agnostic and -specific distillation. Here, the distilled language models only refer to language models distilled with task-agnostic distillation regarding better task-scalability as the number of concerned tasks explodes. 

We formally define the distilled language models as minimal language models~\citep[MiniLMs, somehow abuse of notation with][]{WangW0B0020} notated with $\mathcal{S}$. In contrast, LMs are notated with $\mathcal{T}$. The learning objective of MiniLMs can be abstracted as $\mathcal{L}(\mathcal{S};\mathcal{T},\mathcal{D})$, where $\mathcal{D}$ denotes the data. The specific form of $\mathcal{L}$ can be adapted to arbitrary alignment strategies. We adopt a relation alignment strategy~\citep{WangBHDW21} as follows:
\begin{equation}\nonumber
\begin{aligned}
    &\mathcal{L}(\mathcal{S};\mathcal{T},\mathcal{D})=\mathbb{E}_{\mathbf{X}\sim\mathcal{D}}\sum_{j=1}^{R} \\
    &\text{KL}(\text{Reln}(\mathbf{X};{}^{\mathcal{T}}\mathbf{W}^{\sf Q}_{j}),\text{Reln}(\mathbf{X};{}^{\mathcal{S}}\mathbf{W}^{\sf Q}_{j})) \\
    &+\text{KL}(\text{Reln}(\mathbf{X};{}^{\mathcal{T}}\mathbf{W}^{\sf K}_{j}),\text{Reln}(\mathbf{X};{}^{\mathcal{S}}\mathbf{W}^{\sf K}_{j})) \\
    &+\text{KL}(\text{Reln}(\mathbf{X};{}^{\mathcal{T}}\mathbf{W}^{\sf V}_{j}),\text{Reln}(\mathbf{X};{}^{\mathcal{S}}\mathbf{W}^{\sf V}_{j})),
\end{aligned}
\end{equation}
\begin{equation}\nonumber
\begin{aligned}
    \text{Reln}(\mathbf{X};&{}^{\mathcal{T}}\mathbf{W}^{\sf Q}_{j}) \\
    &=\text{softmax}(\mathbf{X}{}^\mathcal{T}\mathbf{W}^{\sf Q}_{j}{}^\mathcal{T}\mathbf{W}^{\sf Q\top}_{j}\mathbf{X}^{\top}/d^{\sf R}),
\end{aligned}
\end{equation}
where KL stands for kullback-leibler divergence. Essentially, relation heads are derived by merging the original $A$ attention heads and then splitting them to $R$ heads. ${}^{\mathcal{T}}\mathbf{W}^{\sf Q}_{j}$ is the redistributed query parameter of the $j$-th relation head within totally $R$ heads from the last layer of the LM, likewise ${}^{\mathcal{T}}\mathbf{W}^{\sf K}_{j}$ and ${}^{\mathcal{T}}\mathbf{W}^{\sf V}_{j}$ are the key and value parameters. An auxiliary MHA block is employed as the last layer of the MiniLM for better alignment following~\citet{Wang21}. The MiniLM can be then finetuned on any tasks.

\begin{figure}[t]
    \centering
    \includegraphics[width=0.47\textwidth]{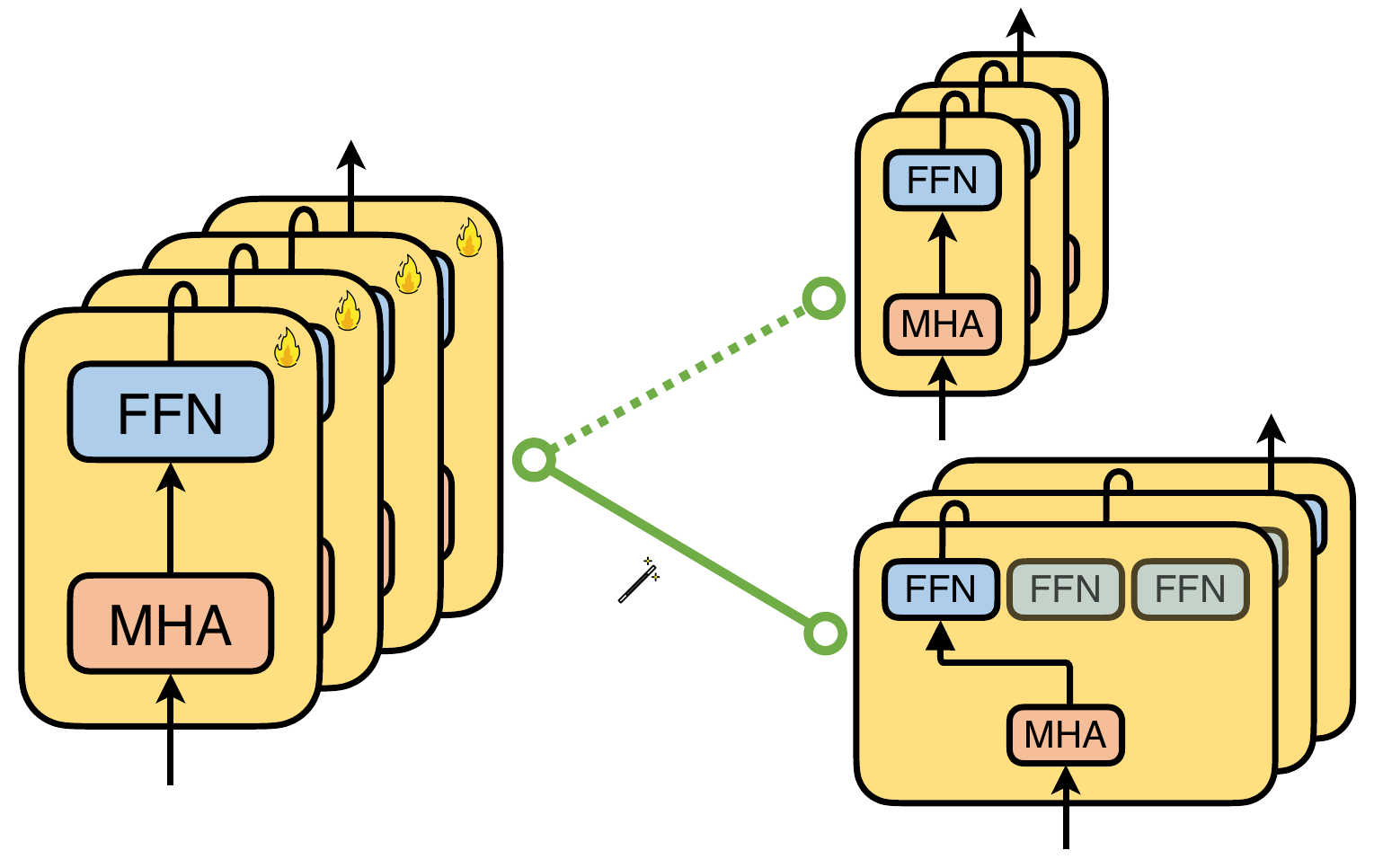}
    \caption{Implementation of \textsc{MiniMoE}.}
    \label{fig_x}
\end{figure}

\paragraph{Mixture of Minimal Experts}

Naturally, in order to enlarge the learning capacity gap of the student, we should add more parameters to the student. However, trivially adding parameters usually leads to a loss of inference compute efficiency. 

To remedy this, a mixture of minimal experts is proposed as in Figure~\ref{fig_x}. Following prior literature~\citep{ShazeerMMDLHD17,ShazeerCPTVKHLH18}, if we consider a FFN block in a MiniLM as a minimal expert, then extra parameters are exactly imposed as minimal experts to be added to the FFN block. The FFN block is enabled as a mixture of $m$ minimal experts FFN$^{\sf MoE}$ in an expert gating tactic as:
\begin{equation}\nonumber
    \text{FFN}^{\sf MoE}(\mathbf{x}_{i})=p_{k}(\mathbf{x}_{i})\cdot\text{FFN}_{k}(\mathbf{x}_{i}),
\end{equation}
\begin{equation}\nonumber
    p_{k}(\mathbf{x}_{i})=\frac{\exp(\mathbf{x}_{i}\mathbf{w}^{\sf G}_{k})}{\sum^{m}_{j=1}\exp(\mathbf{x}_{i}\mathbf{w}^{\sf G}_{j})},
\end{equation}
\begin{equation}\nonumber
    k=\text{argmax}\ p(\mathbf{x}_{i}),
\end{equation}
where the $j$-th gate is parameterized by $\mathbf{w}^{\sf G}_{j}\in\mathbb{R}^{d}$, and correspondingly the $j$-th minimal expert is denoted as FFN$_{j}$. We further follow~\citet{Fedus21} to only allow top-\textit{one} gating (i.e., only the expert with highest gating probability is reserved) because we want to keep the inference compute untouched. There are also diverse designs to achieve the sparse routing, such as hashing~\citep{RollerSSW21} which we find performs worse (cf. Figure~\ref{fig_5}). 

Since only one minimal expert is activated during the inference, the compute is only negligibly increased by expert routing. As a complement, we can also achieve, if necessary, a mixture of experts in an MHA block similarly. 

To encourage a balanced load across minimal experts, a differentiable load balancing objective $\mathcal{B}(\mathcal{S};\mathcal{D})$ is added from~\citet{LepikhinLXCFHKS21} as:
\begin{equation}\nonumber
    \mathcal{B}(\mathcal{S};\mathcal{D})=\alpha\cdot m\sum^{m}_{j=1}f_{j}\cdot P_{j},
\end{equation}
\begin{equation}\nonumber
    f_{j}=\mathbb{E}_{\mathbf{x}_{i}\sim\mathcal{D}}[\mathbb{I}\{\text{argmax}\ p(\mathbf{x}_{i}),j\}], 
\end{equation}
\begin{equation}\nonumber
    P_{j}=\mathbb{E}_{\mathbf{x}_{i}\sim\mathcal{D}}[p_{j}(\mathbf{x}_{i})],
\end{equation}
where $\alpha$ is a coefficient that should be manually tune and is kept as 0.01 throughout this work following~\citep{Fedus21}. While $f_{j}$ depicts the fraction of tokens dispatched to the $j$-th minimal expert, $P_{j}$ describes the fraction of the routing probability to the $j$-th minimal expert. And a multiplier $m$ is used to make the magnitude of the objective invariant to the number of minimal experts. The load balancing objective basically desires a uniform routing so that the loss can be minimized. The objective is added to the MiniLM not only at task-agnostic distillation stage but also at finetuning stage for practical concerns (cf. Figure~\ref{fig_5}).

\section{Experiments}

\subsection{Data and Metrics}

We conduct experiments on GLUE~\citep{WangSMHLB19} and CoNLL~\citep{SangM03}. The GLUE originally consists of two sequence classification tasks, SST-2~\citep{SocherPWCMNP13}, i.e., CoLA~\citep{WarstadtSB19}, and seven sequence-pair classification tasks, i.e., MRPC~\citep{DolanB05}, STS-B~\citep{CerDALS17}, QQP, MNLI~\citep{WilliamsNB18}, QNLI~\citep{RajpurkarZLL16}, RTE~\citep{BentivogliCDG11}, WNLI~\citep{LevesqueDM12}. We exclude WNLI and CoLA due to the evaluation inconsistency (in other words, MiniLMs get dramatically worse results while LMs get much better ones as found out in~\citealp{XiaZC22}) and use the left tasks. The CoNLL is a token classification task. Following BERT~\citep{DevlinCLT19}, we report Accuracy (Acc) on SST-2, MNLI, QNLI, RTE, Spearman Correlation scores (SpCorr) on STS-B, and F1 on MRPC, QQP, CoNLL. Average score over tasks from GLUE (GLUE Score) is additionally computed. Results on development sets are reported. GFLOPs are also attached as theoretical speedup references. We adopt Wikipedia data for task-agnostic disitllation. The detailed statistics, maximum sequence lengths, and metrics of GLUE, CoNLL, and Wikipedia are supplied in Appendix~\ref{app_a}. 

\subsection{Hands-on Details}

Experiments are conducted upon distilling BERT\textsubscript{\sf base} and BERT\textsubscript{\sf large}~\citep{DevlinCLT19}. The distillation carried out on eight Nvidia A100s. The number of relation heads is set to 32. 
After the distillation, finetuning is carried out on one Nvidia A100. The number of minimal experts $m$ is default to 4 otherwise specified.
Other details are supplied in Appendix~\ref{app_b}. All experiments are task-agnostic ones, except those in Table~\ref{tab_6}.

\begin{table*}[ht]
    \caption{The results of comparison between distilling BERT\textsubscript{\sf base} and BERT\textsubscript{\sf large}.}
    \begin{adjustbox}{width=\textwidth,center}
    \begin{threeparttable}
    \begin{tabular}{ll|ccccccc|c|c}
    \toprule
      \textbf{Method} & \textbf{Teacher} & \makecell[c]{\textbf{SST-2}\\\textbf{Acc}} & \makecell[c]{\textbf{MRPC}\\\textbf{F1}} & \makecell[c]{\textbf{STS-B}\\\textbf{SpCorr}} & \makecell[c]{\textbf{QQP}\\\textbf{F1}} & \makecell[c]{\textbf{MNLI-m/mm}\\\textbf{Acc}} & \makecell[c]{\textbf{QNLI}\\\textbf{Acc}} & \makecell[c]{\textbf{RTE}\\\textbf{Acc}} & \makecell[c]{\textbf{GLUE}\\\textbf{Score}} & \makecell[c]{\textbf{CoNLL}\\\textbf{F1}} \\
    \midrule
      \multirow{2}{*}{MiniLM\textsubscript{\sf 6L;384H}} & BERT\textsubscript{\sf base} & 91.1 & 90.1 & 88.1 & 86.7 & 81.5/81.8 & 89.2 & 67.9 & 84.5 & 93.2 \\
      & BERT\textsubscript{\sf large}$\textcolor[rgb]{0,0.7,0}{\boldsymbol{\Uparrow}}$ & 90.9 & 90.6 & 89.0 & 86.9 & 81.8/82.4 & 88.8 & 70.0 & 85.1 & 93.2 \\
      \cmidrule{2-11}
      \multirow{2}{*}{\quad w/ TA} & BERT\textsubscript{\sf base} & 91.3 & 90.3 & 88.2 & 86.8 & 81.4/81.6 & 89.7 & 66.8 & 84.5 & 93.2 \\
      & BERT\textsubscript{\sf large}$\textcolor[rgb]{0,0.7,0}{\boldsymbol{\Uparrow}}$ & 91.4 & 89.8 & 88.5 & 87.0 & 81.9/81.6 & 89.5 & 71.5 & 85.2 & 93.2 \\
      \cmidrule{2-11}
      \rowcolor{green!20} & BERT\textsubscript{\sf base} & 91.3 & 90.2 & 88.6 & 86.5 & 81.6/81.5 & 89.5 & 68.6 & 84.7 & 93.3 \\
      \rowcolor{green!20} \multirow{-2}{*}{ \textsc{MiniMoE}\textsubscript{\sf 6L;384H}} & BERT\textsubscript{\sf large}$\textcolor[rgb]{0,0.7,0}{\boldsymbol{\Uparrow}}$\tnote{1} & 90.5 & 90.0 & 88.8 & 86.8 & 81.8/82.2 & 90.8 & 70.4 & 85.2 & 93.3 \\
    \midrule
      \multirow{2}{*}{MiniLM\textsubscript{\sf 4L;384H}} & BERT\textsubscript{\sf base} & 90.0 & 88.6 & 87.2 & 86.1 & 80.0/80.3 & 87.9 & 67.2 & 83.4 & 91.5 \\
      & BERT\textsubscript{\sf large}$\textcolor[rgb]{1,0,0}{\boldsymbol{\Downarrow}}$ & 89.3 & 87.5 & 88.1 & 85.9 & 79.9/80.2 & 87.6 & 67.2 & 83.2 & 91.2 \\
      \cmidrule{2-11}
      \multirow{2}{*}{\quad w/ TA} & BERT\textsubscript{\sf base} & 90.0 & 88.5 & 87.3 & 86.3 & 80.1/80.7 & 88.0 & 66.4 & 83.4 & 91.8 \\
      & BERT\textsubscript{\sf large}$\textcolor[rgb]{0,0.7,0}{\boldsymbol{\Uparrow}}$ & 90.6 & 88.7 & 88.1 & 86.3 & 80.5/80.7 & 87.9 & 69.0 & 84.0 & 92.2 \\
      \cmidrule{2-11}
      \rowcolor{green!20} & BERT\textsubscript{\sf base} & 90.8 & 88.1 & 88.2 & 85.9 & 79.8/80.4 & 88.6 & 69.3 & 83.9 & 92.3 \\
      \rowcolor{green!20} \multirow{-2}{*}{\textsc{MiniMoE}\textsubscript{\sf 4L;384H}} & BERT\textsubscript{\sf large}$\textcolor[rgb]{0,0.7,0}{\boldsymbol{\Uparrow}}$ & 90.5 & 88.0 & 88.7 & 86.7 & 80.9/80.9 & 89.2 & 69.0 & 84.2 & 92.4 \\
    \midrule
      \multirow{2}{*}{MiniLM\textsubscript{\sf 3L;384H}} & BERT\textsubscript{\sf base} & 89.1 & 89.1 & 86.6 & 85.4 & 77.8/78.4 & 87.2 & 66.1 & 82.5 & 90.1 \\
      & BERT\textsubscript{\sf large}$\textcolor[rgb]{1,0,0}{\boldsymbol{\Downarrow}}$ & 89.1 & 86.1 & 87.1 & 85.1 & 78.6/78.5 & 86.0 & 65.7 & 82.0 & 87.3 \\
      \cmidrule{2-11}
      \multirow{2}{*}{\quad w/ TA} & BERT\textsubscript{\sf base} & 89.8 & 87.8 & 86.0 & 85.5 & 77.6/78.5 & 86.8 & 66.1 & 82.3 & 90.4 \\
      & BERT\textsubscript{\sf large}$\textcolor[rgb]{1,0,0}{\boldsymbol{\Downarrow}}$ & 89.7 & 84.9 & 87.2 & 85.2 & 78.5/79.1 & 86.6 & 66.4 & 82.2 & 90.2 \\
      \cmidrule{2-11}
      \rowcolor{green!20} & BERT\textsubscript{\sf base} & 89.3 & 87.4 & 87.8 & 85.6 & 78.2/78.7 & 87.2 & 67.0 & 82.6 & 90.7 \\
      \rowcolor{green!20} \multirow{-2}{*}{\textsc{MiniMoE}\textsubscript{\sf 3L;384H}} & BERT\textsubscript{\sf large}$\textcolor[rgb]{0,0.7,0}{\boldsymbol{\Uparrow}}$ & 89.1 & 88.4 & 87.6 & 86.2 & 78.8/79.5 & 87.5 & 67.9 & 83.1 & 91.6 \\
    \bottomrule
    \end{tabular}
    \begin{tablenotes}
      \item [1] $\textcolor[rgb]{0,0.7,0}{\boldsymbol{\Uparrow}}$ is used to indicate the deficiency is tackled on both GLUE and CoNLL, otherwise $\textcolor[rgb]{1,0,0}{\boldsymbol{\Downarrow}}$ is used.
      
     \end{tablenotes}
    \end{threeparttable}
    \end{adjustbox}
    \label{tab_4}
\end{table*}

\subsection{Baselines}

We compare \textsc{MiniMoE} with several state-of-the-art baselines.

\paragraph{Conventional Distillation} 

FT indicates direct finetuning the student. KD~\citep{HintonVD15}, PKD~\citep{SunCGL19}, and CKD~\citep{ParkKY21} are methods with different distillation objectives, i.e., KD directly distills logits, PKD distills both logits and hidden states, and CKD distills high-order relations. While above four methods originally initialize student structures by dropping layers, we enable them with a global pruning so that they can adapt to students of small scales. DynaBERT~\citep{HouHSJCL20} uses a two-step pruning to regulate student structures and a distillation objective akin to PKD. MoEBERT~\citep{ZuoZLHZC22} moefies LMs by decomposing FFN blocks to MoE layers. For these task-specific distillation methods, student structures are denoted either with \textsubscript{\sf *L} for preserved number of layers in layer-dropping or with \textsubscript{\sf *\%} for preserved portion of parameters in pruning.

As aforementioned methods are task-specific distillation ones, we then introduce task-agnostic ones. TinyBERT~\citep{JiaoYSJCL0L20} exploits a distillation objective distilled with a combination of various feature alignments. MiniLM~\citep{WangBHDW21} straightforwardly utilizes a distillation objective with a deep relation alignment exactly the same with ours. Since task-agnostic distillation allows both dropping layers and hidden dimensions, student structures are denoted with \textsubscript{\sf *L;*H} accordingly.

\paragraph{Capacity-aware Distillation} 

MiniLM w/ TA~\citep{WangW0B0020} specifically incorporates a teacher assistant to MiniLM. MiniDisc~\citep{Zhang22} argues that the scale of the teacher assistant is crucial for student performance and proposes an automatic teacher assistant scheduler based on properties of pruning. While MiniLM w/ TA is only inspected under a task-agnostic setting, MiniDisc offers results under both task-specific and task-agnostic settings. Nevertheless, only task-specific MiniDisc is selected since pruned MiniLMs can be unfair to compare with. There is scarce work in this direction in which we find these two are the most comparable ones.

\begin{table*}[ht]
    \caption{The results of comparison between \textsc{MiniMoE} and baselines upon distilling BERT\textsubscript{\sf base}. The best results are \textbf{boldfaced}.}
    \begin{adjustbox}{width=\textwidth,center}
    \begin{threeparttable}
    \begin{tabular}{lll|ccccccc|c|c}
    \toprule
      \textbf{Method} & \multicolumn{2}{l|}{\textbf{GFLOPs}} & \makecell[c]{\textbf{SST-2}\\\textbf{Acc}} & \makecell[c]{\textbf{MRPC}\\\textbf{F1}} & \makecell[c]{\textbf{STS-B}\\\textbf{SpCorr}} & \makecell[c]{\textbf{QQP}\\\textbf{F1}} & \makecell[c]{\textbf{MNLI-m/mm}\\\textbf{Acc}} & \makecell[c]{\textbf{QNLI}\\\textbf{Acc}} & \makecell[c]{\textbf{RTE}\\\textbf{Acc}} & \makecell[c]{\textbf{GLUE}\\\textbf{Score}} & \makecell[c]{\textbf{CoNLL}\\\textbf{F1}} \\
    \midrule
      BERT\textsubscript{\sf base} & 10.9 & & 93.8 & 91.5 & 87.1 & 88.4 & 84.9/84.9 & 91.9 & 71.5 & 86.7 & 94.8 \\
    \midrule
      KD\textsubscript{\sf 15\%} & 1.64 & & 89.9 & 88.6 & 85.1 & 86.2 & 79.8/80.2 & 85.6 & 63.9 & 82.4 & 92.8 \\
      PKD\textsubscript{\sf 15\%} & 1.64 & & 90.0 & 88.2 & 85.5 & 86.4 & 80.4/79.6 & 85.9 & 63.9 & 82.5 & 92.9 \\
      MoEBERT\textsubscript{\sf 17\%}\tnote{1} & 1.86 & & 89.6 & 88.4 & 85.1 & 86.8 & 80.4/80.5 & 86.6 & 65.0 & 82.8 & 92.7 \\
      DynaBERT\textsubscript{\sf 15\%}\tnote{2} & 1.64 & & 89.1 & 85.1 & 84.7 & 84.3 & 78.3/79.0 & 86.6 & 61.4 & 81.1 & - \\
      MiniDisc\textsubscript{\sf 15\%}\tnote{3} & 1.64 & & 89.8 & 88.2 & 85.8 & 86.6 & 80.3/79.9 & 87.3 & 68.2 & 83.3 & 93.0 \\
      MiniLM\textsubscript{\sf 6L;384H} & 1.36 & & 91.1 & 90.1 & 88.1 & 86.7 & 81.5/\textbf{81.8} & 89.2 & 67.9 & 84.5 & 93.2 \\
      \quad w/ TA & 1.36 & & 91.3 & \textbf{90.3} & 88.2 & \textbf{86.8} & 81.4/81.6 & \textbf{89.7} & 66.8 & 84.5 & 93.2 \\
      \rowcolor{green!20} \textsc{MiniMoE}\textsubscript{\sf 6L;384H} & 1.36 &  \multirow{-8}{*}{\rotatebox[origin=c]{90}{6$\sim$8$\times$}} & \textbf{91.3} & 90.2 & \textbf{88.6} & 86.5 & \textbf{81.6}/81.5 & 89.5 & \textbf{68.6} & \textbf{84.7} & \textbf{93.3} \\
    \midrule
      KD\textsubscript{\sf 10\%} & 1.08 & & 88.2 & 87.6 & 84.0 & 84.4 & 77.6/77.4 & 84.3 & 67.2 & 81.3 & 91.2 \\
      MiniDisc\textsubscript{\sf 10\%} & 1.08 & & 89.1 & 88.4 & 85.4 & 84.9 & 78.2/78.6 & 86.3 & 68.2 & 82.4 & 91.9 \\
      MiniLM\textsubscript{\sf 4L;384H} & 0.91 & & 90.0 & \textbf{88.6} & 87.2 & 86.1 & 80.0/80.3 & 87.9 & 67.2 & 83.4 & 91.5 \\
      \quad w/ TA & 0.91 & & 90.0 & 88.5 & 87.3 & \textbf{86.3} & \textbf{80.1}/\textbf{80.7} & 88.0 & 66.4 & 83.4 & 91.8 \\
      \rowcolor{green!20} \textsc{MiniMoE}\textsubscript{\sf 4L;384H} & 0.91 &  \multirow{-5}{*}{\rotatebox[origin=c]{90}{10$\sim$12$\times$}} & \textbf{90.8} & 88.1 & \textbf{88.2} & 85.9 & 79.8/80.4 & \textbf{88.6} & \textbf{69.3} & \textbf{83.9} & \textbf{92.3} \\
    \midrule
      KD\textsubscript{\sf 5\%} & 0.54 & & 85.6 & 84.0 & 83.8 & 82.5 & 72.6/73.2 & 81.6 & 63.2 & 78.3 & 83.1 \\
      MiniDisc\textsubscript{\sf 5\%} & 0.54 & & 86.9 & 87.6 & 84.8 & 83.5 & 72.7/74.5 & 84.0 & 66.8 & 80.1 & 85.6 \\
      TinyBERT\textsubscript{\sf 4L;312H}\tnote{4} & 0.60 & & 88.5 & 87.9 & 86.6 & 85.6 & \textbf{78.9}/\textbf{79.2} & \textbf{87.3} & \textbf{67.2} & \textbf{82.7} & - \\
      MiniLM\textsubscript{\sf 3L;384H} & 0.68 & & 89.1 & \textbf{89.1} & 86.6 & 85.4 & 77.8/78.4 & 87.2 & 66.1 & 82.5 & 90.1 \\
      \quad w/ TA & 0.68 & & \textbf{89.8} & 87.8 & 86.0 & 85.5 & 77.6/78.5 & 86.8 & 66.1 & 82.3 & 90.4 \\
      \rowcolor{green!20} \textsc{MiniMoE}\textsubscript{\sf 3L;384H} & 0.68 &  \multirow{-6}{*}{\rotatebox[origin=c]{90}{16$\sim$20$\times$}} & 89.3 & 87.4 & \textbf{87.8} & \textbf{85.6} & 78.2/78.7 & 87.2 & 67.0 & 82.6 & \textbf{90.7} \\
    \midrule
      KD\textsubscript{\sf 3\%} & 0.32 & & 85.2 & 83.6 & 81.9 & 82.1 & 71.9/72.7 & 81.9 & 57.4 & 77.1 & 74.3 \\
      MiniDisc\textsubscript{\sf 3\%} & 0.32 & & 85.9 & 85.7 & 83.6 & 83.1 & 72.9/73.6 & 81.9 & 58.1 & 78.1 & 80.5 \\
      MiniLM\textsubscript{\sf 4L;192H} & 0.23 & & 86.9 & \textbf{86.4} & 85.4 & 84.3 & 77.5/77.5 & 85.9 & 65.3 & 81.2 & 90.0 \\
      \quad w/ TA & 0.23 & & 87.2 & 85.6 & 86.2 & 84.6 & 77.3/\textbf{78.0} & 86.6 & 64.6 & 81.3 & 89.9 \\
      \rowcolor{green!20} \textsc{MiniMoE}\textsubscript{\sf 4L;192H} & 0.23 &  \multirow{-5}{*}{\rotatebox[origin=c]{90}{34$\sim$47$\times$}} & \textbf{88.1} & 86.1 & \textbf{86.2} & \textbf{84.8} & \textbf{77.7}/77.8 & \textbf{86.6} & \textbf{68.6} & \textbf{82.0} & \textbf{91.3} \\
    \bottomrule
    \end{tabular}
    \begin{tablenotes}
      \item [1] Each FFN is split to 8 experts and each MHA to 4 to reach the sparsity.
      \item [2] The results are produced from the released code.
      \item [3] The results are mainly taken from the original papers.
      \item [4] The results are produced without additional task-specific distillation. 
     \end{tablenotes}
    \end{threeparttable}
    \end{adjustbox}
    \label{tab_3}
\end{table*}

\subsection{Main Results}

From results in Table~\ref{tab_4}, we find that \textsc{MiniMoE} lifts the curse of capacity gap at all concerned times of compression. For example, \textsc{MiniMoE}\textsubscript{\sf 3L;384H} disitlled from BERT\textsubscript{\sf large} has an absolute 0.5 performance gain over that distilled from BERT\textsubscript{\sf base} on GLUE, and the value on CoNLL is 0.9. On another note, MiniLM is free of the curse only at small times of compression, and MiniLM w/ TA can somewhat saves MiniLM from the curse at intermediate times of compression. For example, both MiniLM\textsubscript{\sf 3L;384H} and MiniLM\textsubscript{\sf 3L;384H} w/ TA fail to improve the performance via replacing BERT\textsubscript{\sf base} with BERT\textsubscript{\sf large}. Results on larger LMs like BERT\textsubscript{\sf xlarge} are supplied in Appendix~\ref{app_f} for scalability check.

From results in Table~\ref{tab_3}, we also observe that \textsc{MiniMoE} generally outperforms both conventional and capacity-aware baselines and achieves new state-of-the-art performance at all concerned times of compression. For example, \textsc{MiniMoE}\textsubscript{\sf 4L;192H} has an absolute 0.8 performance improvement over MiniLM\textsubscript{\sf 4L;192H} on GLUE. And the reason why \textsc{MiniMoE}\textsubscript{\sf 3L;384H} slightly underperforms TinyBERT\textsubscript{\sf 4L;312H} is conjectured due to structure discrepancy. Another observation is that the larger times of compression, the larger the performance improvements are. For example, \textsc{MiniMoE}\textsubscript{\sf 4L;384H} yields an absolute 0.5 performance improvement over MiniLM\textsubscript{\sf 4L;384H} in contrast to that \textsc{MiniMoE}\textsubscript{\sf 6L;384H} only has an absolute 0.2 performance improvement over MiniLM\textsubscript{\sf 6L;384H} on GLUE. Two more notes are that, MoEBERT nearly reaches the compression upper bound, and TinyBERT is reproduced without additional task-specific distillation for a fair comparison while the results with additional task-specific distillation are supplied in Appendix~\ref{app_c}.

\subsection{Analyses}

\paragraph{Practical Inference Compute}

Since GFLOPs can only measure the theoretical inference compute, we further provide throughput (i.e., tokens per micro second) as a practical inference compute measure. As in Table~\ref{tab_5}, 20$\times$ compression can realize a significant inference compute gain in comparing KD\textsubscript{\sf 5\%} to BERT\textsubscript{\sf base}. The practical speedup is approximately 6.7$\times$. Moreover, \textsc{MiniMoE}\textsubscript{\sf 3L;384H} can retain most inference compute gain even if the routing algorithm can slightly reduce the gain when compared to MiniLM\textsubscript{\sf 3L;384H}. Although \textsc{MiniMoE} is seemingly memory-inefficient regarding the increased parameter amount, we argue the potential of a memory-efficient \textsc{MiniMoE} with parameter decomposition in Appendix~\ref{app_g}.

\begin{table}[ht]
    \centering
    \caption{Practical inference compute with reference to BERT\textsubscript{\sf base}.}
    \begin{adjustbox}{width=0.50\textwidth,center}
    \begin{tabular}{llrr}
    \toprule
      \textbf{Method} & \textbf{GFLOPs} & \textbf{Throughput} & \textbf{Params} \\
    \midrule
      BERT\textsubscript{\sf base} & 10.9 & 80.8 tokens/ms & 109.5 M \\
      KD\textsubscript{\sf 5\%} & 0.54 & 544.7 tokens/ms & 28.7 M \\
      MiniLM\textsubscript{\sf 3L;384H} & 0.68 & 485.3 tokens/ms & 17.2 M\\
      \textsc{MiniMoE}\textsubscript{\sf 3L;384H} & 0.68 & 433.1 tokens/ms & 28.3 M \\
    \bottomrule
    \end{tabular}
    \end{adjustbox}
    \label{tab_5}
\end{table}

\paragraph{Student Scale}

Following the behavior of Figure~\ref{fig_2}, we would like to showcase whether \textsc{MiniMoE} can lift the curse across difference student scales. From Figure~\ref{fig_3}, the curse is lifted to a large extent by \textsc{MiniMoE} in comparison with MiniLM and MiniLM w/ TA. However, \textsc{MiniMoE} meets a bottleneck that distilling BERT\textsubscript{\sf large} makes no difference from distilling BERT\textsubscript{\sf base} when the FLOPs is at an extreme value 0.04G ($\sim$273$\times$ compression from BERT\textsubscript{\sf base}, $\sim$968$\times$ compression from BERT\textsubscript{\sf large}). We explore the extreme case by plugging a TA to \textsc{MiniMoE} as supplied in Appendix~\ref{app_d}.

\begin{figure}[ht]
    \centering
    \includegraphics[width=0.47\textwidth]{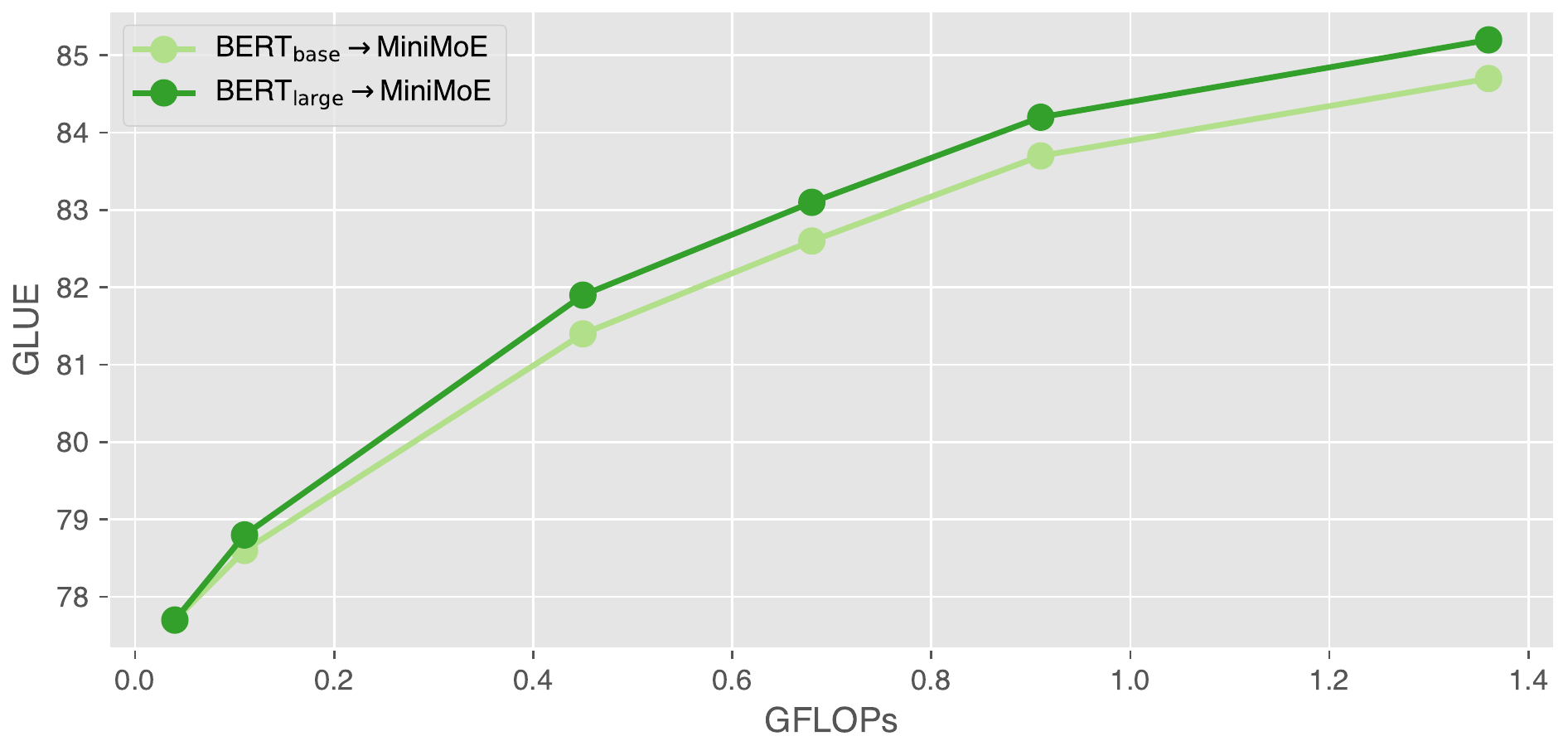}
    \caption{The performance of \textsc{MiniMoE} across different student scales upon distilling BERT\textsubscript{\sf base}.}
    \label{fig_3}
\end{figure}

\begin{figure}[ht]
    \centering
    \includegraphics[width=0.47\textwidth]{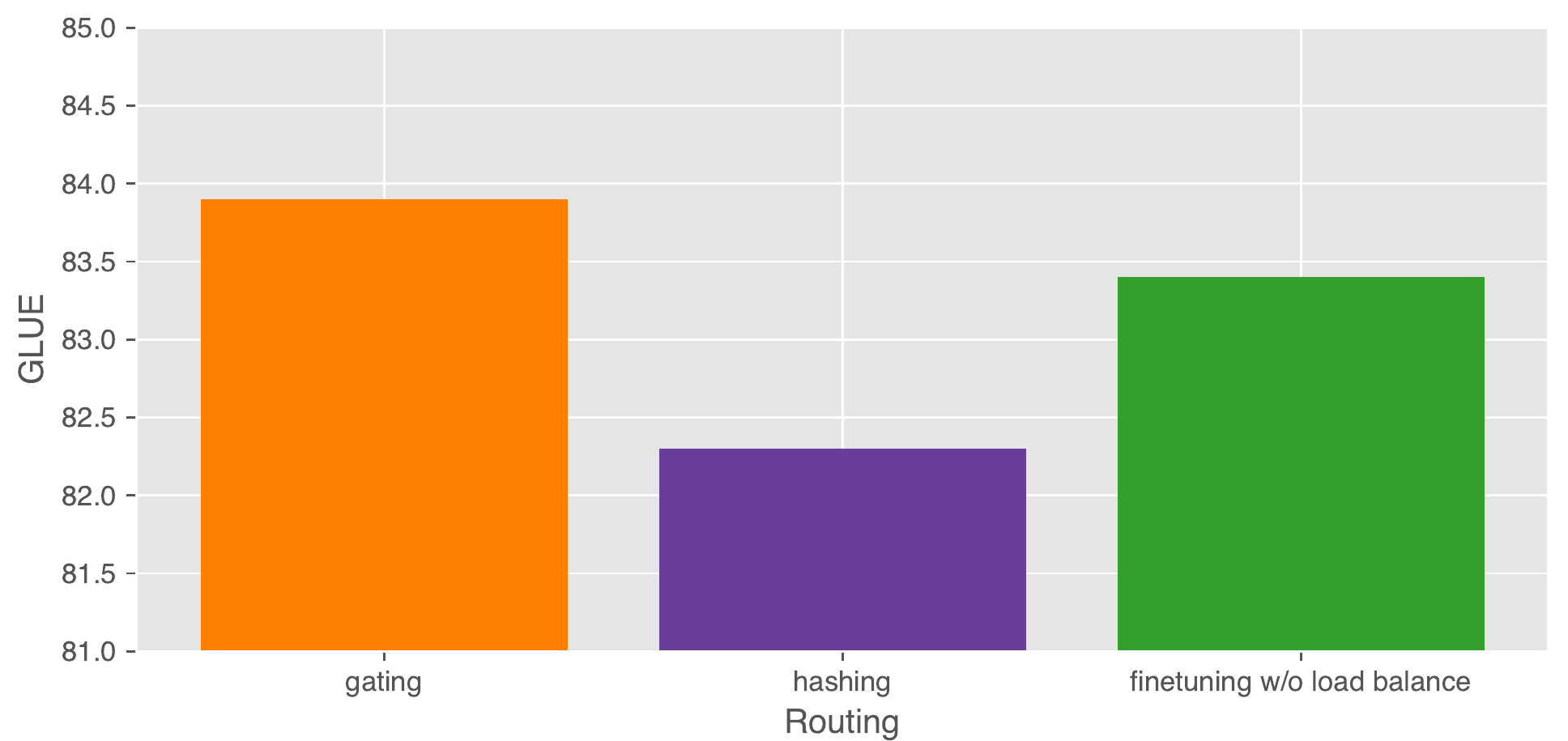}
    \caption{The performance of different routing choices with MiniMoE\textsubscript{\sf 4L;384H} upon distilling BERT\textsubscript{\sf base}.}
    \label{fig_5}
\end{figure}


\paragraph{Routing Algorithm}

Routing algorithm is also a crucial part benefiting from a nice design choice. We compare our used gating with another fancy choice hashing. We at the same time show the effect of using load balance at finetuning stage as well. From the results in Figure~\ref{fig_5}, we see that gating outperforms hashing, and load balancing at both distillation and finetuning stages is superior to that at only distillation stage. 

\paragraph{Expert Number}

Regarding the expert number $m$ is a parameter of great importance for \textsc{MiniMoE}, we here study its impact on the performance. The results in Figure~\ref{fig_4} reveal a first ascending then descending phenomenon while adding experts at a time. The phenomenon suggests there is a tradeoff when increasing the number of experts, and we conjecture the tradeoff accords with the famous bias-variance tradeoff~\citep[][Chapter 7]{HastieFT01}. That is, adding experts grows the parameter scale, thus decreasing bias yet increasing variance. Another interesting notice is that smaller students favor fewer experts. Based on the tradeoff conjecture, we hypothesize that smaller students are more sensitive to variance increment, as the biases of smaller students can arrive at a minimum more quickly than those of larger ones.

\begin{figure}[ht]
    \centering
    \includegraphics[width=0.47\textwidth]{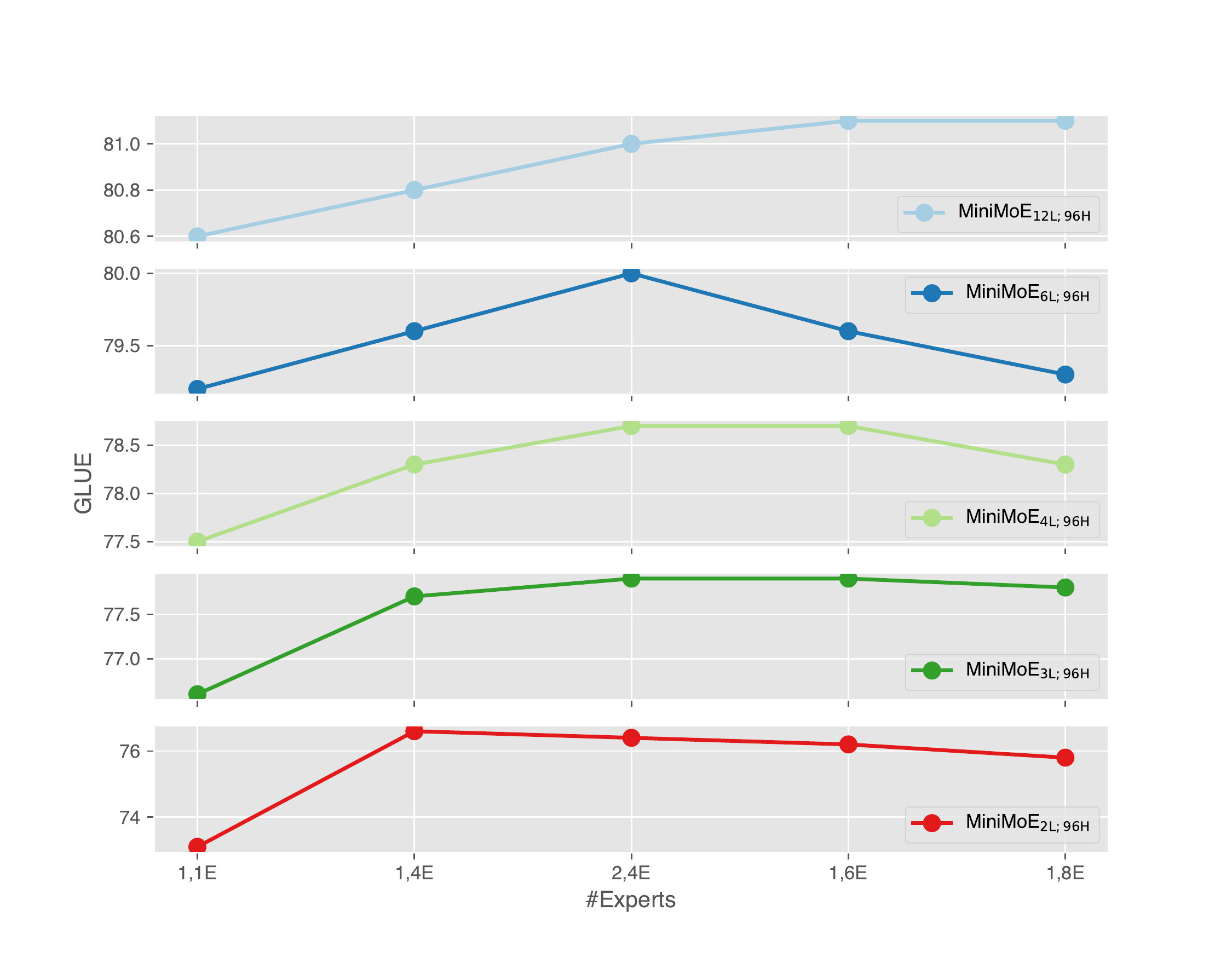}
    \caption{The impact of expert number on the performance upon distilling BERT\textsubscript{\sf base}, where {\sf x,yE} denotes {\sf x} experts in each MHA and {\sf y} experts in each FFN. For example, {\sf 1,1E} is the original dense model, and {\sf 1,4E} is the MoE model used in Table~\ref{tab_3}.}
    \label{fig_4}
\end{figure}

\section{Conclusions}

In this work, we uncover a curse of capacity gap in LM distillation, which is well discussed in previous studies on vision model distillation but not recognized in distilling LMs. While there are some studies investigating to fill the gap, we find they can hardly tackle the curse. Interestingly, existing solutions in large vision language model distillation which are stated to be able to lift the curse fail to achieve so for LMs. So we aim at lifting the curse by proposing a well-motivated \textsc{MiniMoE}. The \textsc{MiniMoE} can essentially enlarge the capacity of the student but leave the inference compute nearly untouched. Our experimental results indicate that \textsc{MiniMoE} can not only lift the curse but also realize new state of the arts.

\section*{Limitations}

The central limitation of \textsc{MiniMoE} is the increased memory footprint, which we could potentially address in the near future according to Appendix~\ref{app_g}.


\section*{Acknowledgements}

We thank the anonymous reviewers and chairs for their constructive suggestions. This research was supported in part by Natural Science Foundation of Beijing (grant number: 4222036) and Huawei Technologies (grant number: TC20201228005). Jingang Wang is funded by Beijing Nova Program (grant number: 20220484098).

\bibliography{anthology,custom}
\bibliographystyle{acl_natbib}

\appendix

\section{Data Summary}
\label{app_a}

The detailed statistics, maximum sequence lengths, and metrics for datasets we use are shown in Table~\ref{tab_a}, where the Wikipedia corpus used for distillation is also attached.

\begin{table*}[ht]
    \centering
    \caption{The statistics, maximum sequence lengths, and metrics.}
    \begin{adjustbox}{width=0.8\textwidth,center}
    \begin{tabular}{lrrcc}
      \toprule
        \textbf{Dataset} & \textbf{\#Train exam.} & \textbf{\#Dev exam.} & \textbf{Max. length} & \textbf{Metric} \\
      \midrule
        SST-2 &  67K & 0.9K & 64 & Accuracy \\
        MRPC & 3.7K & 0.4K & 128 & F1 \\
        STS-B & 7K & 1.5K & 128 & Spearman Correlation \\
        QQP & 364K & 40K & 128 & F1 \\
        MNLI-m/mm & 393K & 20K & 128 & Accuracy \\
        QNLI & 105K & 5.5K & 128 & Accuracy \\
        RTE & 2.5K & 0.3K & 128 & Accuracy \\
        CoNLL & 14k & 3.3k & 128 & F1 \\
     \midrule
        Wikipedia & 35M & - & 128 & - \\
      \bottomrule
    \end{tabular}
    \end{adjustbox}
    \label{tab_a}
\end{table*}

\section{More Hands-on Details}
\label{app_b}

\paragraph{General Guidelines} The details of hyperparameters for distillation and finetuning are shown in Table~\ref{tab_b}. We will be releasing our code and scripts in the final version for exact reproducibility. For all cases, students are always randomly initialized following MiniLM.

\begin{table*}[ht]
    \centering
    \caption{The hyperparameters for both distillation and finetuning. The search grids for GLUE and CoNLL are indicated differently.}
    \begin{adjustbox}{width=0.7\textwidth,center}
    \begin{tabular}{lcc}
      \toprule
        \textbf{Hyperparameter} & \textbf{Distillation} & \textbf{Finetuning} \\
      \midrule
        Batch size & 8$\times$128$=$1024 & \{16,32\} \\
        Optimizer & AdamW & AdamW \\
        Learning rate & 3e-4 & \{1e-5,2e-5,3e-5\}/\{1e-4,2e-4,3e-4\} \\
        Training epochs & 5 & 10 \\
        Earlystop epochs & - & 5 \\
        Warmup proportion & 0.01 & 0.1 \\
        Weight decay & 0.01 & 0.01 \\
      \bottomrule
    \end{tabular}
    \end{adjustbox}
    \label{tab_b}
\end{table*}

\paragraph{Implementation of MiniMoE}

We strictly follow the design of SwitchTransformer~\citep{Fedus21} and extend it to the design of our \textsc{MiniMoE}. We also follow their associated appendices to implement an MoE for multihead attention. In detail, based on the original design, we treat an FFN/MHA as an minimal expert, adopt top-\textit{one} gating with load balancing, and employ a capacity factor of 1.25 for a good tradeoff (where overflowed tokens are dropped). For the parameter effect of adding an expert, we take expanding MiniLM\textsubscript{\sf 4L;192H} (11.3M) to MiniMoE\textsubscript{\sf 4L;192H}-{\sf 1,2E} (14.9M) as an example. The number of parameters for embeddings is not changed (6.0M$\to$6.0M), but adding an expert ({\sf 1,1E$\to$1,2E}) results in an increased number of parameters for transformers (5.4M$\to$9.0M).

Further, our design for HashLayer~\citep{RollerSSW21} also strictly follows the original random hash design, i.e., per-token hash is used. We strictly follow the best configuration of DeKD as reported in their paper~\citep{Zhao22}, where $\alpha$ is 1.0 and $\beta$ is 8.0.

\section{Results w/ Task-specific Distillation}
\label{app_c}

The results with task-specific distillation are produced from released checkpoints. The results in Table~\ref{tab_c} demonstrate that TinyBERT is largely supported with data augmentation in the task-specific distillation stage for great performance. Another intriguing observation is that data augmentation only works for distillation but not for finetuning potentially due to the noise-resilience of distillation, so we preferably replace the finetuning stage with a task-specific distillation stage in experimenting with MiniLM.

\begin{table*}[ht]
    \caption{The results with and without task-specific distillation upon distilling BERT\textsubscript{\sf base}.}
    \begin{adjustbox}{width=0.95\textwidth,center}
    \begin{threeparttable}
    \begin{tabular}{ll|ccccccc|c}
    \toprule
      \textbf{Method} & \textbf{GFLOPs} & \makecell[c]{\textbf{SST-2}\\\textbf{Acc}} & \makecell[c]{\textbf{MRPC}\\\textbf{F1}} & \makecell[c]{\textbf{STS-B}\\\textbf{SpCorr}} & \makecell[c]{\textbf{QQP}\\\textbf{F1}} & \makecell[c]{\textbf{MNLI-m/mm}\\\textbf{Acc}} & \makecell[c]{\textbf{QNLI}\\\textbf{Acc}} & \makecell[c]{\textbf{RTE}\\\textbf{Acc}} & \makecell[c]{\textbf{GLUE}\\\textbf{Score}} \\
    \midrule
      TinyBERT\textsubscript{\sf 4L;312H} & 0.60 & 88.5 & 87.9 & 86.6 & 85.6 & 78.9/79.2 & 87.3 & 67.2 & 82.7 \\
      \quad w/ tsd.+aug. & 0.60 & 91.6 & 90.2 & 86.3 & 87.1 & 81.2/82.8 & 87.6 & 64.3 & 83.9 \\
      MiniDisc\textsubscript{\sf 5\%} & 0.54 & 86.9 & 87.6 & 84.8 & 83.5 & 72.7/74.5 & 84.0 & 66.8 & 80.1 \\
      \quad w/ aug. & 0.54 & 91.2 & 90.0 & 87.5 & 85.4 & 79.0/79.8 & 84.5 & 67.5 & 83.1 \\
      MiniLM\textsubscript{\sf 3L;384H} & 0.68 & 89.1 & 89.1 & 86.6 & 85.4 & 77.8/78.4 & 87.2 & 66.1 & 82.5 \\
      \quad w/ aug. & 0.68 & 88.7 & 85.9 & 83.1 & 82.8 & 76.2/76.0 & 86.6 & 62.5 & 80.2 \\
      \quad w/ tsd.+aug.\tnote{*} & 0.68 & 91.2 & 91.1 & 88.2 & 86.6 & 79.9/80.4 & 87.8 & 66.1 & 83.9 \\
    \bottomrule
    \end{tabular}
    \begin{tablenotes}
      \item [*] tsd. indicates task-specific distillation and aug. indicates distillation with data augmentation.
    \end{tablenotes}
    \end{threeparttable}
    \end{adjustbox}
    \label{tab_c}
\end{table*}

\section{\textsc{MiniMoE} at Extreme}
\label{app_d}

The results in Table~\ref{app_d} witness that, \textsc{MiniMoE} sometimes struggles with extreme cases but can be enhanced with the help of TA.

\begin{table*}[ht]
    \caption{The results of \textsc{MiniMoE} at extreme upon distilling BERT\textsubscript{\sf base} and BERT\textsubscript{\sf large} respectively.}
    \begin{adjustbox}{width=0.95\textwidth,center}
    \begin{tabular}{lll|ccccccc|c}
    \toprule
      \textbf{Method} & \multicolumn{2}{l|}{\textbf{GFLOPs}} & \makecell[c]{\textbf{SST-2}\\\textbf{Acc}} & \makecell[c]{\textbf{MRPC}\\\textbf{F1}} & \makecell[c]{\textbf{STS-B}\\\textbf{SpCorr}} & \makecell[c]{\textbf{QQP}\\\textbf{F1}} & \makecell[c]{\textbf{MNLI-m/mm}\\\textbf{Acc}} & \makecell[c]{\textbf{QNLI}\\\textbf{Acc}} & \makecell[c]{\textbf{RTE}\\\textbf{Acc}} & \makecell[c]{\textbf{GLUE}\\\textbf{Score}} \\
    \midrule
    BERT\textsubscript{\sf base} & 10.9 & & 93.8 & 91.5 & 87.1 & 88.4 & 84.9/84.9 & 91.9 & 71.5 & 86.7 \\
    \midrule
      MiniLM\textsubscript{\sf 4L;96H} & 0.06 & & 83.4 & 84.6 & 81.9 & 80.7 & 71.2/72.5 & 82.0 & 63.7 & 77.5 \\
      \quad w/ TA & 0.06 & & 84.5 & 83.9 & 82.2 & 80.5 & 70.8/72.4 & 81.6 & 63.7 & 77.5 \\
      \textsc{MiniMoE}\textsubscript{\sf 4L;96H} & 0.06 & & 84.8 & 84.0 & 83.1 & 81.2 & 72.2/73.5 & 82.2 & 65.7 & 78.3 \\
      \quad w/ TA & 0.06 & \multirow{-4}{*}{\rotatebox[origin=c]{90}{$\sim$182$\times$}} & 84.2 & 85.3 & 83.7 & 82.2 & 72.6/73.7 & 83.6 & 65.3 & 78.8 \\
    \midrule
      MiniLM\textsubscript{\sf 3L;96H} & 0.04 & & 83.7 & 83.8 & 81.2 & 80.6 & 70.3/71.5 & 80.5 & 61.4 & 76.6 \\
      \quad w/ TA & 0.04 & & 82.6 & 83.3 & 81.2 & 80.3 & 70.3/71.9 & 80.7 & 61.4 & 76.5 \\
      \textsc{MiniMoE}\textsubscript{\sf 3L;96H} & 0.04 & & 84.8 & 84.5 & 82.8 & 80.8 & 70.3/71.9 & 81.9 & 65.0 & 77.7 \\
      \quad w/ TA & 0.04 & \multirow{-4}{*}{\rotatebox[origin=c]{90}{$\sim$273$\times$}} & 83.5 & 85.1 & 83.1 & 81.4 & 71.4/73.0 & 83.3 & 61.7 & 77.8 \\
    \midrule
    \midrule
    BERT\textsubscript{\sf large} & 38.7 & & 94.2 & 92.5 & 90.1 & 89.0 & 86.6/86.3 & 92.5 & 75.5 & 88.3 \\
    \midrule
      MiniLM\textsubscript{\sf 4L;96H} & 0.06 & & 83.3 & 83.9 & 82.5 & 81.0 & 71.4/72.4 & 81.8 & 63.2 & 77.4 \\
      \quad w/ TA & 0.06 & & 84.1 & 85.8 & 82.4 & 81.3 & 71.9/73.4 & 82.3 & 64.3 & 78.2 \\
      \textsc{MiniMoE}\textsubscript{\sf 4L;96H} & 0.06 & & 84.9 & 85.4 & 82.9 & 81.6 & 74.0/74.8 & 83.6 & 64.6 & 79.0 \\
      \quad w/ TA & 0.06 & \multirow{-4}{*}{\rotatebox[origin=c]{90}{$\sim$645$\times$}} & 84.2 & 85.3 & 83.2 & 81.2 & 72.5/74.0 & 83.4 & 66.1 & 78.7 \\
    \midrule
      MiniLM\textsubscript{\sf 3L;96H} & 0.04 & & 83.1 & 84.1 & 81.8 & 79.7 & 69.7/70.8 & 79.2 & 63.2 & 76.5 \\
      \quad w/ TA & 0.04 & & 83.0 & 83.2 & 81.2 & 80.3 & 69.3/70.7 & 81.8 & 60.7 & 76.3 \\
      \textsc{MiniMoE}\textsubscript{\sf 3L;96H} & 0.04 & & 83.0 & 84.5 & 82.7 & 81.1 & 71.7/72.8 & 82.1 & 63.9 & 77.7 \\
      \quad w/ TA & 0.04 & \multirow{-4}{*}{\rotatebox[origin=c]{90}{$\sim$968$\times$}} & 83.8 & 84.4 & 83.0 & 81.2 & 71.8/72.8 & 82.4 & 63.9 & 77.9 \\
    \bottomrule
    \end{tabular}
    \end{adjustbox}
    \label{tab_d}
\end{table*}

\section{Related Work}
\label{app_e}

\paragraph{Knowledge Distillation} 

Distillation~\citep{HintonVD15} is a de facto way to compression~\citep{BucilaCN06} LMs by transferring the knowledge of LMs to small language models. During the distillation, a small language model serves as a student and treats a LM as a teacher to learn from. There are three lines of work in LM distillation: firstly, task-specific distillation~\citep{SunCGL19,LiLZXYJ20,SunGFCWL20,ParkKY21,HouHSJCL20,XiaZC22} that conducts distillation on a specific task at finetuning stage; secondly, task-agnostic distillation~\citep{Turc19,Sanh19,SunYSLYZ20,WangBHDW21} that conducts distillation at pretraining stage; and thirdly, two-stage distillation~\citep{JiaoYSJCL0L20} that combines the power of both task-agnostic and -specific distillation. Though these methods realize promising performance when distilling LMs like BERT\textsubscript{\sf base}, they can come short of scalability to LMs like BERT\textsubscript{\sf large} especially when the student is of a small scale. In fact, driven by recent observations~\citep{WangW0B0020,Zhang22,MirzadehFLLMG20,ChoH19}, distillation with a small student can be faced with two deficiencies due to the large capacity gap. A few studies including teacher assistant-based~\citep{MirzadehFLLMG20,Zhang22} and student-friendly~\citep{ParkCJKH21,ZhouXM22} distillation can alleviate the first but fail to resolve the second. It is noteworthy that some work states they can tackle both deficiencies for vision models~\citep{ZhuW21a,Zhao22}, but preliminary studies have found that they are either expensive or not capable of LMs. In our work, we follow the line of task-agnostic distillation of LMs and aims at lifting both efficiencies for the first time.

\paragraph{Mixture of Experts} 

Based on the idea of conditional computation~\citep{BengioBPP15}, MoE layer is proposed to scale-up LMs in a sparsely activated fashion~\citep{ShazeerMMDLHD17}. There are diverse designs to achieve the sparse routing, such as gating~\citep{ShazeerCPTVKHLH18} and hashing~\citep{RollerSSW21}, with necessary balance constraints~\citep{LepikhinLXCFHKS21}. MoE layers are then joined to LMs in the past one or two years~\citep{Fedus21,DuHDTLXKZYFZFBZ22}. Owing to the sparse activation property, the scales of LMs are significantly increased with only minor losses in compute efficiency on modern GPU devices so that the underneath scaling laws can be uncovered in a comparably cheap manner~\citep{He21,RajbhandariLYZA22}. In our work, we are impelled by the merits of MoE, and propose a \textsc{MiniMoE} so that the capacity of the student can be enlarged without much inference overhead increment. \textsc{MiniMoE} can be similar to a certain stream of methods~\citep{ZhangL00S022,ZuoZLHZC22} that pursue accelerating LMs via precisely moefying them. Nonetheless, the moefication process is exerted to LMs with limited inference compute improvements compared to those advanced by \textsc{MiniMoE}. Note that there are emergent work exploring compressing MoE LMs~\citep{Xue22} to dense students, which is walking down the same street in the opposite side since we instead focus on compressing dense LMs to MoE students.

\section{Results on BERT\textsubscript{\sf xlarge}}
\label{app_f}

LM distillation, under either the task-agnostic setting as in our paper or the task-specific setting, has seldom been investigated to distil LMs larger than BERT\textsubscript{\sf large}. Even worse, there is only little work has been investigated to distil BERT\textsubscript{\sf large} under the task-agnostic setting.

In the main results, we just follow the paces of the task-agnostic setting, not only due to the huge scales of larger LMs like T5 and GPT3 but also due to that task-agnostic LM distillation requires the access to the original pretraining data of usually vast volume. What's more, larger LMs like T5 can be incomparable to BERT owing to the architectural difference, and existing task-agnostic methods including ours may easily fail.

Regarding all the considerations mentioned above, however, we try to check the existence of the curse of capacity gap and examine \textsc{MiniMoE} under a comparably larger-scale setting, i.e., Chinese BERT\textsubscript{\sf base} v.s. BERT\textsubscript{\sf xlarge} on some datasets from CLUE~\citep{XuHZLCLXSYYTDLS20} (which can be viewed as the Chinese GLUE). These datasets include a topic classification dataset TNews, a similar question matching dataset AFQMC, and a natural language inference dataset OCNLI. The preliminary results are shown in Table~\ref{tab_e}. As far as we know, while English BERT\textsubscript{\sf xlarge} with more than one billion parameters trained by Nvidia Megatron~\citep{Shoeybi19} is not publicly available, Chinese BERT\textsubscript{\sf xlarge} can be easily downloaded through huggingface.\footnote{\url{https://huggingface.co/IDEA-CCNL/Erlangshen-MegatronBert-1.3B.}} It is noteworthy that Chinese BERT\textsubscript{\sf base} is trained on Chinese Wikipedia ($\sim$15G) while Chinese BERT\textsubscript{\sf xlarge} is trained on Wudao Corpus ($\sim$300G)~\citep{YuanZDDLCZYT21}. We use Wikipedia data as the default choice for distillation, but Wudao data seems to be a more suitable (though not that fair) one for distilling Chinese BERT\textsubscript{\sf xlarge} as we have found that Wikipedia could not make the distillation converge properly. Painfully, it consumes around one week to achieve one epoch of distilling Chinese BERT\textsubscript{\sf xlarge} using Wudao in contrast to five epochs of distilling Chinese BERT\textsubscript{\sf base} on Wikipedia in one day. The results show that Chinese BERT\textsubscript{\sf xlarge} is cursed to realize better students than Chinese BERT\textsubscript{\sf base} does, and \textsc{MiniMoE} has the potential to lift the curse under the larger-scale setting.

\begin{table*}[ht]
    \caption{The results of comparison between distilling Chinese BERT\textsubscript{\sf base} and BERT\textsubscript{\sf xlarge}.}
    \begin{adjustbox}{width=0.65\textwidth,center}
    \begin{threeparttable}
    \begin{tabular}{ll|ccc|c}
    \toprule
      \textbf{Method} & \textbf{Teacher} & \makecell[c]{\textbf{TNews}\\\textbf{Acc}} & \makecell[c]{\textbf{AFQMC}\\\textbf{Acc}} & \makecell[c]{\textbf{OCNLI}\\\textbf{Acc}} & \makecell[c]{\textbf{CLUE}\\\textbf{Score}} \\
    \midrule
      \multirow{2}{*}{Teacher} & BERT\textsubscript{\sf base} & 57.0 & 74.8 & 75.4 & 69.1 \\
      & BERT\textsubscript{\sf xlarge}$\textcolor[rgb]{0,0.7,0}{\boldsymbol{\Uparrow}}$ & 60.0 & 76.1 & 79.2 & 71.7 \\
      \cmidrule{2-6}
      \multirow{2}{*}{MiniLM\textsubscript{\sf 6L;384H}} & BERT\textsubscript{\sf base} & 55.5 & 72.0 & 71.0 & 66.2 \\
      & BERT\textsubscript{\sf xlarge}$\textcolor[rgb]{1,0,0}{\boldsymbol{\Downarrow}}$ & 54.9 & 70.7 & 69.9 & 65.2 \\
      \cmidrule{2-6}
      \rowcolor{green!20} & BERT\textsubscript{\sf base} & 55.9 & 72.9 & 70.8 & 66.5 \\
      \rowcolor{green!20} \multirow{-2}{*}{ \textsc{MiniMoE}\textsubscript{\sf 6L;384H}} & BERT\textsubscript{\sf xlarge}$\textcolor[rgb]{0,0.7,0}{\boldsymbol{\Uparrow}}$\tnote{1} & 56.7 & 72.4 & 71.0 & 66.7 \\
    \bottomrule
    \end{tabular}
    \begin{tablenotes}
      \item [1] $\textcolor[rgb]{0,0.7,0}{\boldsymbol{\Uparrow}}$ is used to indicate the deficiency is tackled on CLUE, otherwise $\textcolor[rgb]{1,0,0}{\boldsymbol{\Downarrow}}$ is used.
     \end{tablenotes}
    \end{threeparttable}
    \end{adjustbox}
    \label{tab_e}
\end{table*}

\section{Potential of Memory-efficient \textsc{MiniMoE}}
\label{app_g}

One may argue that \textsc{MiniMoE} introduces much more memory consumption than MiniLM does, largely limiting the application scenarios for memory-sensitive devices (e.g., mobile devices).

However, there is no free lunch to enlarge the capacity of the student. We should claim that, in order to increase the capacity, memory/space consumption is a cheaper choice (e.g., more experts) than latency/time consumption (e.g., more operations), and this is potentially the reason why large LMs like PaLM~\citep{Chowd22} and FLAN~\citep{Chung22} could become so popular. We should also highlight that scenarios that require rather limited memory consumption (e.g., mobile scenarios) is currently not (though can be in the near future) the main concern of LMs. In contrast, LMs are usually served in GPU scenarios, where memory/space is easy to access.

Luckily, we find a potential path to address the memory efficiency concern based on the idea of parameter decomposition (e.g., SVD). While embedding parameter decomposition is a general way to reduce the number of parameters for embeddings and could not make \textsc{MiniMoE} as memory-efficient as MiniLM. We uncover that, without much performance sacrifice, transformer parameter decomposition in \textsc{MiniMoE} can be easier in comparison with that in MiniLM owing to the sparse activation property of MoE. That is, transformer parameters in \textsc{MiniMoE} have lower ranks than those in MiniLM, and this can be shown by analyzing the magnitudes of the normalized singular values using SVD. The preliminary results of the output matrices of the last FFN layers separately from MiniLM\textsubscript{\sf 3L;384H} and \textsc{MiniMoE}\textsubscript{\sf 3L;384H} are shown in Table~\ref{tab_f}.

\begin{table*}[ht]
    \centering
    \caption{The SVD analysis to show the potential of memory-efficient \textsc{MiniMoE}.}
    \adjustbox{width=\textwidth,center}{
    \begin{tabular}{lllll}
    \toprule
      \textbf{Method} & \textbf{\%Value$>$0.2} & \textbf{\%Value$>$0.1} & \textbf{\%Value$>$0.05} & \textbf{Trm Params (Value$>$0.1)} \\
    \midrule
      MiniLM\textsubscript{\sf 3L;384H} dense & 315/384=82\% & 356/384=93\% & 373/384=97\% & 5.3M$\to$5.1M \\
    \midrule
      MiniMoE\textsubscript{\sf 3L;384H} expert \#1 & 6/384=2\% & 82/384=21\% & 275/384=72\% & - \\
      MiniMoE\textsubscript{\sf 3L;384H} expert \#2 & 34/384=9\% & 220/384=57\% & 361/384=94\% & - \\
      MiniMoE\textsubscript{\sf 3L;384H} expert \#3 & 15/384=4\% & 175/384=46\% & 338/384=88\% & - \\
      MiniMoE\textsubscript{\sf 3L;384H} expert \#4 & 24/384=6\% & 200/384=52\% & 357/384=93\% & - \\
    \midrule
      MiniMoE\textsubscript{\sf 3L;384H} all experts & 79/384/4=5\% & 677/384/4=44\% & 1331/384/4=87\% & 16.4M$\to$8.2M \\
    \bottomrule
    \end{tabular}
    }
    \label{tab_f}
\end{table*}

With this finding, \textsc{MiniMoE} can compress more parameters than MiniLM does using parameter decomposition and finally yield a similar memory efficiency to that of MiniLM. 

We also explore another complementary solution that views \textsc{MiniMoE} as teacher assistant and further distils from \textsc{MiniMoE} to its dense counterpart. The results are shown in Table~\ref{tab_g}, implying that this only results in an acceptable performance degradation on large datasets like MNLI and SST-2 but undesired performance degradation on small datasets like RTE.

\begin{table*}[ht]
    \centering
    \caption{The results of further distilling from \textsc{MiniMoE} to its dense counterpart.}
    \begin{adjustbox}{width=0.6\textwidth,center}
    \begin{tabular}{lccc}
    \toprule
      \textbf{Method} & \textbf{MNLI-m/mm} & \textbf{SST-2} & \textbf{RTE} \\
    \midrule
      MiniLM\textsubscript{\sf 3L;384H} & 77.8/78.4 & 89.1 & 66.1 \\
      \textsc{MiniMoE}\textsubscript{\sf 3L;384H} & 78.2/78.7 & 89.3 & 67.0 \\
      \textsc{MiniMoE}\textsubscript{\sf 3L;384H}$\Rightarrow$MiniLM\textsubscript{\sf 3L;384H}  & 78.1/78.4 & 89.5 & 64.3 \\
    \bottomrule
    \end{tabular}
    \end{adjustbox}
    \label{tab_g}
\end{table*}

\begin{table*}[ht]
    \centering
    \caption{The results of applying vision distillation methods upon BERT\textsubscript{\sf base}.}
    \begin{adjustbox}{width=0.35\textwidth,center}
    \begin{tabular}{lclc}
    \toprule
      \textbf{Method} & \textbf{GLUE} & \textbf{Method} & \textbf{GLUE} \\
    \midrule
      KD\textsubscript{\sf 2L} & 72.9 & KD\textsubscript{\sf 4L} & 81.8 \\
      \quad w/ TA & 73.4 & \quad w/ TA & 82.1 \\
      DeKD\textsubscript{\sf 2L} & 72.7 & DeKD\textsubscript{\sf 4L} & 81.6 \\
    \bottomrule
    \end{tabular}
    \end{adjustbox}
    \label{tab_6}
\end{table*}

\section{Failure of Vision Method}
\label{app_h}

We examine in a preliminary study the effectiveness of one of the vision model distillation methods~\citep[DeKD,][]{Zhao22} which can lift the curse of capacity gap. From the results in Table~\ref{tab_6}, we unfortunately discover that DeKD can only give comparable performance in distilling BERT\textsubscript{\sf base}, which even lags behind KD w/ TA. It hints that vision model distillation methods are not that capable of LMs.

\end{document}